%% file: main_M.tex
\documentclass[12pt]{article}
\usepackage{graphicx} 
 \usepackage{subcaption} 
\usepackage[margin=1in]{geometry}
\usepackage{xcolor}
\usepackage[leqno]{amsmath}
\numberwithin{equation}{section}
\usepackage{mathtools}
\usepackage{forest}
\usepackage{tikz}
\usetikzlibrary{quotes, arrows.meta}
\usepackage{tikz-cd}
\usepackage[pagebackref=true]{hyperref}
\hypersetup{
    colorlinks=true,
    linkcolor=red,
    citecolor=blue,
    urlcolor=teal,
    hidelinks=true
}
\usepackage{algorithm}
\usepackage[noend]{algorithmic}
\usepackage{cleveref}
\usepackage{natbib}
\usepackage{bm}
\usepackage{enumerate}
\usepackage{float}
\usepackage[at]{easylist}

\input{def}

\title{Barriers to Counterfactual Credit Attribution for Autoregressive Models}
\author{Aloni Cohen \and Chenhao Zhang}

\begin{document}
\maketitle
\input{sections/abstract}
\input{sections/intro}
\input{sections/prelim}

\input{sections/cca-ntp}
\input{sections/retrofit}

\input{sections/discussion}

\bibliographystyle{plainnat}
\bibliography{reference}
\input{sections/appendix}
\end{document}

%% file: def.tex
\usepackage{bbm}
\usepackage{amssymb}

\usepackage{thmtools}
\usepackage{thm-restate}

\usepackage{xspace}
\newcommand{\Retrofit}{\textsc{CCA-Retrofit}\xspace}

\newcommand{\EExp}[2]{\mathop{\mathbb{E}}_{#1}\left[#2\right]}

\newcommand{\bits}{\{0,1\}}
\DeclareMathOperator*{\E}{\mathbb{E}}
\newcommand{\X}{\mathcal{X}}
\newcommand{\Y}{\mathcal{Y}}
\newcommand{\sub}{\subseteq}
\newcommand{\1}{\mathbf{1}}

\newcommand{\tta}{\mathtt{a}}
\newcommand{\ttb}{\mathtt{b}}
\renewcommand{\wp}{\text{w.p.~}}

\newcommand{\datauniverse}{\mathcal{S}}
\newcommand{\datasetspace}{2^\datauniverse}
\newcommand{\tokenspace}{\mathcal{X}}
\newcommand{\outputspace}{\mathcal{Y}}
\newcommand{\concat}{\mathbin\Vert}
\newcommand{\prefix}{\sqsubseteq}

\DeclareMathOperator\supp{supp}

\newcommand{\eps}{\varepsilon}
\newcommand{\emptystr}{\lambda} 
\newcommand{\defeq}{\triangleq} 

\newcommand{\question}[1]{\begin{quote}\vspace{-0.5\baselineskip}\emph{#1}\vspace{-0.5\baselineskip}\end{quote}}

\renewcommand{\c}[1]{\widetilde{#1}}
\newcommand{\credits}{\text{ credits }}
\newcommand{\M}{M} 
\newcommand{\gM}{G^\M} 
\newcommand{\cM}{\c{\M}} 
\newcommand{\cgM}{\c{\gM}} 
\newcommand{\gcM}{G^{\cM}} 
\newcommand{\aG}{\cG} 

\newcommand{\gMM}[1]{G^{#1}} 

\newcommand{\G}{G} 
\newcommand{\cG}{\c{\G}} 
\newcommand{\cGopt}{\c{\G}^\ast} 
\newcommand{\cGoptalt}{\widetilde{\G}^{\ast\prime}} 

\newcommand{\xvec}{\mathbf{x}}

\newcommand{\wvec}{\mathbf{w}}

\newcommand{\z}{\mathbf{z}}

\newcommand{\bern}{\mathsf{Bern}}

\newcommand{\FindZ}[1]{\mathsf{FindZ}_{\ell,\gamma,\eps}(#1)}

\usepackage{booktabs,tabularx}

\usepackage{amsthm}

\theoremstyle{plain}
\newtheorem{theorem}{Theorem}[section]
\newtheorem{lemma}[theorem]{Lemma}

\newtheorem{claim}{Claim}

\newtheorem*{definitionstar}{Definition}
\newtheorem*{lemmastar}{Lemma}
\newtheorem*{theoremstar}{Theorem}
\newenvironment{definition*}{\begin{definitionstar}}{\end{definitionstar}}
\newenvironment{lemma*}{\begin{lemmastar}}{\end{lemmastar}}
\newenvironment{theorem*}{\begin{theoremstar}}{\end{theoremstar}}

\theoremstyle{plain}
\newtheorem{definition}[theorem]{Definition}

\theoremstyle{plain}
\newtheorem{remark}{Remark}[theorem]

\newcommand{\cblue}[1]{\textcolor{blue}{#1}}

\crefname{algocf}{alg.}{algs.}
\Crefname{algocf}{Algorithm}{Algorithms}

%% file: sections/abstract.tex
\begin{abstract}
Generative AI disrupts the practice of giving credit to work that came before.
Ideally, a generative model would give credit to any work on which its output depends in a significant way.
\emph{Counterfactual credit attribution (CCA)} is a technical condition formalizing this goal---a relaxation of differential privacy---recently introduced by \citet*{livni2024credit} who studied it in the PAC learning setting.

We initiate the study of CCA generative models. Specifically, we consider autoregressive models giving credit to a deployment-time dataset (e.g., a RAG database). 
We uncover barriers to two natural approaches to CCA autoregressive models.
First, we show that imposing CCA on the underlying next-token predictor does not guarantee that the model is CCA: CCA does not compose autoregressively (unlike DP).
Second, we consider a different approach to building CCA models which we call \emph{retrofitting}. Retrofitting takes a model that does not attribute credit, and adds credit onto it. 
We prove a lower bound for CCA retrofitting under a weak optimality requirement. 
Given black-box access to the starting model, retrofitting requires query complexity exponential in the length of the model's outputs. %
\end{abstract}

%% file: sections/intro.tex
\section{Introduction}\label{sec:intro}
People must give credit where credit is due. In academic writing, failing to attribute sources is professional misconduct at best, plagiarism at worst. 
Intellectual property law (e.g., copyright) raises the stakes: one must not only identify prior art, but also ensure its use is permitted. 
Thankfully, it isn't hard for a human creator to give appropriate credit to their influences. The system mostly works.

Generative AI disrupts the practice of giving credit to work that came before. 
When genAI is used in the process of creation, the link between inputs and outputs---between old and new---is obscured. 
The genAI-assisted creator is unable to credit her influences, undermining the economic, professional, reputational, and legal incentives for good work in the first place.

GenAI tools should also give credit where credit is due.

\paragraph{Counterfactual Credit Attribution (CCA)}
Credit attribution was studied by Livni, Moran, Nissim, and Pabbaraju \citep{livni2024credit}. 
They consider algorithms $M(S)$ that return, in addition to the primary output $y$, a \emph{credit set} $C\subseteq S$ of inputs to which credit is attributed.\footnote{This  treats credit as binary and individualized, as in academic writing and IP law. In this way, credit attribution asks for much less than a Shapley value, say, and should be a simpler problem.}
\citet{livni2024credit} ask what properties such algorithms should satisfy. They propose \emph{Counterfactual Credit Attribution (CCA)}, described below, which requires crediting any input on which the output significantly depends. 
They study the power of CCA learning algorithms\footnote{For example, they prove that $(\eps=0,\delta=0)$-CCA algorithms can PAC learn any concept class with finite VC dimension $d$ while crediting only $|C|=O(d(\log d + \log\log d))$ items  and without much increase in sample complexity.}, leaving the study of CCA generative models for future work.

CCA is a relaxation of differential privacy (DP) that only requires stability with respect to non-credited inputs.
Roughly, an algorithm $A$ satisfies $(\eps,\delta)$-CCA if the following two distributions are $(\eps,\delta)$-close for all datasets $S$ and documents $s_i \in S$: (1) $A(S)$ conditioned on $s_i \notin C$; and (2) $A(S\setminus \{s_i\})$. The former is the \emph{conditional distribution} where $s_i$ isn't credited and is denoted $A^{-i}(S)$. The latter is the \emph{counterfactual distribution} where $s_i$ is excluded from the input and is denoted $A_{-i}(S)$.

The number of items credited $|C|$ is important to consider. 
In many settings each credited document $s\in C$ increases the end user's costs (e.g., requiring due diligence or licensing). The smaller $|C|$ is, the stricter CCA's requirement and the stronger its guarantees.
At one extreme, every algorithm is CCA if $C$ always equals $S$. At the other, CCA collapses to DP if $C$ is always empty.

\subsection{Our contributions}
We initiate the study of CCA generative models, focusing specifically on autoregressive language models, and  uncover barriers to two natural approaches to building generative models that attribute credit.

We impose the CCA condition on the trained generative model
with respect to a deployment-time dataset.\footnote{In contrast, \citet{livni2024credit} require the training algorithm be CCA with respect to the training dataset. We discuss these  alternatives in \S\ref{sec:cca}.}
For example, a database used for retrieval-augmented generation or examples for in-context learning. 
That is, we consider algorithms of the form $A:(S,x)\mapsto (y,C)$ and require that $A^{-i}(S,x)$ be $(\eps,\delta)$-close to $A_{-i}(S,x)$.

\paragraph{Non-composition of autoregressive CCA next-token prediction}
An attractive approach to building CCA generative models is to reduce the problem to {CCA next-token prediction}.
Namely, design a next-token predictor $\M$ that attributes credit for individual tokens and run $\M$ autoregressively to generate a sequence of tokens $y_1y_2\dots$. For the final credit set, return the union of the credit sets $C_i$ for each token $y_i$. 
This follows the usual approach for DP language models and language models generally.

For this to work, CCA has to \emph{compose} under autoregressive generation. 
That is, the generative model $\gM$ resulting from autoregressively sampling from an $(\eps,\delta)$-CCA next-token predictor $M$ should be an $(\eps',\delta')$-CCA for some $\eps'$ and $\delta'$.  
\question{Q1: Does CCA compose autoregressively?}

The answer is no.

\begin{theorem*}[\ref{ex:cca-ar-non-composition}, informal]
For all $\eps>0$, $0\le \delta<1$, there exists a $(0,0)$-CCA next-token predictor $\M$ such that the induced generative model $\gM$ is not $(\eps,\delta)$-CCA.
\end{theorem*}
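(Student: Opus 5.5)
We prove the statement with an explicit counterexample. The idea is to exploit the fact that CCA only constrains the conditional distribution given that $s_i$ is \emph{not} credited. When $M$ always credits $s_i$ in some contexts, $M$ may depend on $s_i$ arbitrarily there. A later token, generated in a context where $s_i$ is not credited, can then carry that dependence forward through the prefix, while $M$ itself remains $(0,0)$-CCA at every step.

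\textbf{Construction.} Take a universe containing a single relevant document $s$ and tokens $\{\tta,\ttb\}$. Consider $S=\{s\}$ versus $S\setminus\{s\}=\emptyset$, with an empty prompt. Define $M$ on two kinds of context.
\begin{itemize}
\item \emph{First token (empty prefix).} If $s\in S$, output $\tta$ with credit $\{s\}$. If $s\notin S$, output $\ttb$ with empty credit.
\item \emph{Second token (prefix $y_1$).} Output $y_1$ again with empty credit, regardless of $S$.
\end{itemize}
Generation stops after two tokens. Alternatively, let the second token be a fixed function of the prefix only.

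\textbf{Checking that $M$ is $(0,0)$-CCA.} At the first step with $s\in S$, the event ``$s$ not credited'' has probability zero. To avoid conditioning on a null event, we instead let $M$ credit $s$ with probability $1-\eta$ and otherwise output what it outputs on $\emptyset$. Then the conditional distribution equals the counterfactual exactly. At the second step, $M$ ignores $S$ entirely, so it is trivially $(0,0)$-CCA. Concretely, the first-step rule with $s\in S$ is: with probability $1-\eta$ output $(\tta,\{s\})$, and with probability $\eta$ output $(\ttb,\emptyset)$. Conditioned on $s$ not being credited, this gives $\ttb$, matching $M(\emptyset)$.

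\textbf{Breaking CCA for $\gM$.} This naive version still composes, because the second step does not credit, so the union credit set is $\{s\}$ exactly when $y_1=\tta$. To break composition we must arrange that the \emph{final} credit set omits $s$ while the output still reflects $s$. Crediting is decided per token on the current $(S,\text{prefix})$, and a prefix token like $\tta$ can be produced \emph{because of} $s$ in a step where $s$ was credited only probabilistically. So we use a three-way scheme.
\begin{itemize}
\item \emph{Step 1.} With $s\in S$, output a uniformly random bit $b$ with credit $\emptyset$. With $s\notin S$, output the same uniform bit. This is $(0,0)$-CCA.
\item \emph{Step 2 (given prefix $b$).} With $s\in S$: if $b=0$, output $\tta$ and credit $s$; if $b=1$, output $\ttb$ with no credit. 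With $s\notin S$: output $\ttb$ with no credit.
\end{itemize}
The step-2 conditional on non-crediting is $\ttb$, which matches. But then $\gM$ credits $s$ exactly when the output contains $\tta$, and the non-credited outputs coincide with the counterfactual, so this is still fine.

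\textbf{Main obstacle.} The key difficulty is to make the conditioning event ``$s$ uncredited overall'' correlate with an earlier token in a way that skews the conditional distribution. The fix is to let crediting at step 2 depend on a step-1 token whose distribution differs between $S$ and $S\setminus\{s\}$, with the difference at step 1 covered by crediting only with some probability.

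Concretely, at step 1 with $s\in S$, output $(\tta,\{s\})$ with probability $1/2$ and $(\ttb,\emptyset)$ with probability $1/2$. With $s\notin S$, output $\ttb$. At step 2, on prefix $\ttb$ with $s\in S$, credit $s$ and output $\tta$ with probability $p$. On prefix $\tta$, or when $s\notin S$, output $\ttb$ with no credit. Each step is $(0,0)$-CCA. We then compute the conditional distribution of $\gM$ given that $s$ is not credited, and tune the probabilities so that it places mass on an output the counterfactual never produces.

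The hardest step is finding probabilities that keep every per-step conditional equal to its counterfactual while making the global conditional put mass on a string the counterfactual never produces. This would violate $(\eps,\delta)$-closeness for every $\eps$ and every $\delta<1$, once that mass is amplified toward $1$ by repeating the gadget over many steps. I would search small two-token gadgets systematically for such probabilities, then repeat the gadget over $k$ steps so that the discrepancy in total variation exceeds any fixed $\delta<1$.
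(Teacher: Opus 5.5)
Your proposal does not reach a counterexample, and the plan it ends with cannot work.

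\textbf{Why the final plan fails.} In your last gadget, on $S=\{s\}$ every trace that never credits $s$ must start with the uncredited token $\ttb$ and then continue with $(\ttb,\emptyset)$. So the rollout conditioned on non-credit is the point mass on $(\ttb\ttb,\emptyset)$, which is exactly the counterfactual rollout on $\emptyset$, for every $p$. This reflects a general fact: making the step-1 distribution differ between the two worlds, with the difference ``covered by crediting,'' achieves nothing, because conditioning on global non-credit deletes those paths.

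The target you set for the tuning is also impossible. You want the conditional to put mass on an output the counterfactual never produces, but this cannot happen when $\M$ is $(0,0)$-CCA. Suppose a trace $\wvec=(w_1,\dots,w_k)$ has positive probability on $S$ and never credits $s$. Then at each step $\Pr[E_j\mid x_{<j}]\ge P_{\M}(w_j\mid x_{<j})>0$, so the per-step condition is not vacuous. It gives $Q_{\M}(w_j\mid x_{<j})=P_{\M}(w_j\mid E_j,x_{<j})>0$, hence $Q(\wvec)>0$. In fact, with $\eps=0$ the argument in the proof of Theorem~\ref{thm:lb-eps-ar-cca} holds with equality: the conditional is the counterfactual reweighted output-by-output by the factor $\prod_j\Pr[E_j\mid x_{<j}]/\Pr[s\notin C]$. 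A violation can therefore only come from this factor being large, i.e., from the conditional concentrating on an output that is rare, but possible, under the counterfactual. Your planned repetition over $k$ steps does not address this.

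\textbf{The missing idea, which your second gadget already contains.} Make the first token identically distributed and uncredited in both worlds, and let the probability of crediting at step 2 depend on that token. Conditioning on global non-credit then reweights the first token by Bayes' rule, while the counterfactual does not. Your second gadget does exactly this, and you rejected it by a miscalculation. On $\{s\}$ it outputs $(0\tta,\{s\})$ or $(1\ttb,\emptyset)$ with probability $1/2$ each. Conditioned on non-credit it is the point mass on $(1\ttb,\emptyset)$, whereas the counterfactual is uniform on $\{(0\ttb,\emptyset),(1\ttb,\emptyset)\}$. So the non-credited outputs do not coincide with the counterfactual. The event that the output is $(0\ttb,\emptyset)$ has conditional probability $0$ and counterfactual probability $1/2$, which already rules out $(\eps,\delta)$-CCA for every $\eps$ and every $\delta<1/2$.

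To handle every $\delta<1$, draw the first bit with $\Pr[b=1]=p<e^{-\eps}(1-\delta)$ instead of uniformly. The conditional then puts mass $1$ on $1\ttb$ against counterfactual mass $p$, with no repetition or amplification needed. This is the paper's counterexample up to relabeling. The paper's first token is $\tta$ with probability $p$; after $\ttb$ it credits with probability $1$; after $\tta$ it credits with probability $1/2$, which is inessential. The construction relies on the clause that makes CCA vacuous when $s$ is credited with probability $1$, which you needlessly tried to avoid in your first attempt.
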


The proof is a simple counterexample.
We also prove a more general CCA composition lower bound, of which the preceding counterexample is a special case. 

\begin{theorem*}[\ref{thm:lb-eps-ar-cca}, informal]
    Suppose $\M$ is $(\eps,0)$-CCA and $\gM$ is $(\eps',0)$-CCA. Then 
    \[\eps' \ge 
    \max_{x,y,S,s_i} \bigl(f(x,y,S,s_i) - |y|\cdot \eps \bigr),\]
    where $x$ is a prompt, $y$ is a generated output, $S$ is a dataset, $s_i \in S$ is a document, and $f$ is a function independent of $\eps$. 
\end{theorem*}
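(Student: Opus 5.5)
The approach is to fix a prompt $x$, an output $y=y_1\cdots y_n$, a dataset $S$ and a document $s_i\in S$. I compare the conditional probability $\Pr[\gM^{-i}(S,x)=y]$ with the counterfactual probability $\Pr[\gM_{-i}(S,x)=y]$. On the conditional side, the event is that $y$ is generated and $s_i\notin C=\bigcup_t C_t$. The counterfactual side is simply the product of the next-token probabilities of $\M(S\setminus\{s_i\},x\concat y_{<t})$.

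Autoregressive generation factorizes, and the per-token credit sets are sampled independently given the prefix. Hence
\[
\Pr[\gM(S,x)\to(y,\, s_i\notin C)] = \prod_{t=1}^{n} \Pr[\M(S,x\concat y_{<t})\to (y_t,\, s_i\notin C_t)].
\]
Dividing by $\Pr[s_i\notin C]$ under $\gM(S,x)$ gives the conditional probability of $y$. Each factor equals $p_t\cdot q_t$, where
\[
q_t=\Pr[s_i\notin C_t \mid \text{prefix } x\concat y_{<t}],\qquad p_t=\Pr[\M^{-i}(S,x\concat y_{<t})=y_t].
\]

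Next I apply $(\eps,0)$-CCA of $\M$ at each step. This gives $p_t \le e^{\eps}\Pr[\M_{-i}(\cdot)=y_t]$, and the reverse bound as well. Multiplying over $t$ yields
\[
\Pr[\gM^{-i}=y] \ge e^{-n\eps}\,\Pr[\gM_{-i}=y]\cdot \frac{\prod_t q_t}{\Pr_{\gM(S,x)}[s_i\notin C]},
\]
and the symmetric upper bound holds. Now suppose $\gM$ is $(\eps',0)$-CCA. Combining this with the inequality gives
\[
\eps' \ge \log\frac{\prod_t q_t}{\Pr[s_i\notin C]} - n\eps.
\]
A symmetric version applies when the ratio is below one, namely $\bigl|\log(\cdot)\bigr| - n\eps$. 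So I define
\[
f(x,y,S,s_i)=\Bigl|\log \frac{\prod_{t\le|y|} q_t(x,y_{<t})}{\Pr_{\gM(S,x)}[s_i\notin C]}\Bigr|.
\]
This $f$ depends only on the credit behaviour of $\M$ along $y$ and on the marginal non-credit probability, not on $\eps$. Taking the maximum over $x,y,S,s_i$ gives the claim.

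The main obstacle is bookkeeping. I need to verify that the per-token factors are well defined, i.e.\ that $q_t>0$ along $y$ when the conditional distributions are nonzero, and handle zero-probability cases. I also need to confirm that the non-credit probability ratio does not cancel. It need not cancel: $\Pr[s_i\notin C]$ averages $\prod_t q_t$ over all outputs, so paths with unusually high or low non-credit probability are reweighted. This reweighting is exactly the source of non-composition.

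As a sanity check, Example~\ref{ex:cca-ar-non-composition} should arise as the case $\eps=0$. There the path-dependent credit probabilities make $f$ positive while each step is $(0,0)$-CCA.
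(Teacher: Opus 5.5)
Your proof is correct and is essentially the paper's: factor the probability of a non-crediting generation step by step, write each factor as the per-step non-credit probability $q_t$ times the conditional next-token probability, lose a factor $e^{-\eps}$ per step from CCA of $\M$, then divide by $\Pr[s_i\notin C]$ and invoke CCA of the rollout. The only differences are that you marginalize the credit sets at each step rather than summing over full traces as the paper does (equivalent), and that you add a symmetric absolute-value bound, which is valid and slightly stronger, provided you restrict the maximum to outputs $y$ with $\Pr[\gM_{-i}(S,x)=y]>0$ (the paper likewise restricts to the support) so you can divide by the counterfactual probability.
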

Counterintuitively smaller $\eps$ makes the lower bound on $\eps'$ bigger! 
One would expect that as a next-token predictor gets better at attributing credit, its rollout would too.
(Note however that $f$ is a function of the model which itself depends on $\eps$, complicating this simple interpretation.)

\paragraph{Infeasibility of CCA Retrofitting}
A different approach to building CCA generative models is what we call \emph{CCA retrofitting}: turn a model that does not attribute credit into one that satisfies CCA. We consider black-box retrofitting algorithms that act as a wrapper around a given non-crediting model $\M$: they take prompts as input, query $\M$ as needed, and generate outputs along with credit sets.\footnote{A similar approach was recently used to achieve a different copyright-motivated relaxation of DP called near access-freeness~\cite{VKB23}, though it isn't possible for DP itself.}

\question{Q2: Is CCA retrofitting computationally feasible?}

More formally, an algorithm $A$ solves the CCA Retrofitting problem if it takes a non-crediting next-token predictor $\M$ as input and produces a crediting model $\cgM$ that is $(\eps,\delta)$-CCA.
To be meaningful, we require two things. First, the credit set must not be much bigger than required. Second, $\cgM$ must preserve model behavior in the following sense.
Let $\gM$ be the generative model induced by $\M$. 
We require $\cgM$ \emph{augments} $\gM$: the marginal distributions over generated text are equal.\footnote{Dropping this requirement would trivialize the problem, say, by ignoring $S$ altogether. We leave open relaxing {exact augmentation} to {approximate augmentation}.}

Unfortunately, we prove that CCA Retrofitting is computationally infeasible.
For $\delta=0$, any algorithm for CCA Retrofitting must make exponentially-many queries to the original model $\M$ in the worst case. This holds as long as the augmented CCA model credits documents with probability within $\alpha<1/2$ of an optimum solution (for a very weak notion of optimality).

\begin{theorem*}[\ref{thm:retrofit-main}, informal] There exists a family of models $\mathcal{M}$ generating length $\ell$ strings such that for all oracle algorithms $A$ 
for the $(\eps,0)$-CCA Retrofitting problem, 
there exists model $\M \in \mathcal{M}$ and prompt $x$ such that one of the following holds:

(1) $A^\M(S,x)$ makes $\widetilde{\Omega}(2^\ell)$ queries to $\M$ in the worst case.

(2) $A^\M(S,x)$ credits $s\in S$ with probability $\ge\mathsf{OPT}+1/2$.
\end{theorem*}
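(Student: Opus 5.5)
The plan is a needle-in-a-haystack construction. The family $\mathcal{M}$ is indexed by a hidden string $z\in\bits^\ell$. Fix a dataset $S=\{s_1\}$ (a single document suffices), a prompt $x$, and a small $\gamma>0$.

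The model $\M_z$ generates each token uniformly at random whenever the current prefix is not a prefix of $z$. Along the path of $z$ it behaves slightly differently depending on whether $s_1$ is present: for $s_1 \in S$, the model boosts the probability of completing $z$ by a factor on the order of $e^{\eps}$ or more. The simplest version boosts only the last token; with $s_1\notin S$, $\M_z$ is exactly uniform. Consequently the output distributions of $\gM$ on $S$ and on $S\setminus\{s_1\}$ differ only on the single string $z$. There the likelihood ratio exceeds $e^\eps$, so with $\delta=0$ the string $z$ must be credited with substantial probability.

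The argument proceeds in three steps.

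\textbf{Step 1: a reference credit rule with small $\mathsf{OPT}$.} We exhibit a crediting rule that credits $s_1$ only when the output is $z$, and check that it is $(\eps,0)$-CCA. Its credit probability is about $2^{-\ell}$ (times a constant), so $\mathsf{OPT}\le O(2^{-\ell})$ under the paper's weak optimality notion. Hence any algorithm crediting with probability $\ge \mathsf{OPT}+1/2$ falls into case (2).

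\textbf{Step 2: CCA with $\delta=0$ forces strict conditions off $z$.} Suppose $\cgM$ augments $\gM$ and credits $s_1$ with probability below $1/2$ in total. Take any output $y\ne z$ whose credit probability $p_y<1$. The CCA condition with $\delta=0$ requires
\[
\frac{\Pr[y\mid s_1\notin C]}{\Pr_{-1}[y]}\le e^\eps
\quad\text{and}\quad
\frac{\Pr_{-1}[y]}{\Pr[y\mid s_1\notin C]}\le e^\eps .
\]
The counterfactual output is uniform. On the conditional side, the mass at $z$ is boosted, so the conditional probability of $z$ must be suppressed by crediting. This forces $p_z$ to be bounded below by a constant, roughly $1-e^{\eps}/(\text{boost})$. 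Since the algorithm cannot afford to credit everything, it must essentially identify $z$.

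\textbf{Step 3: a query lower bound for identifying $z$.} The claim is that the algorithm's crediting decision on output $y=z$ must differ from its decision on typical $y$, and this requires locating $z$. The algorithm sees only black-box queries $\M_z(S',\text{prefix})$. Because the boost sits only on the last token, each query reveals whether its particular length-$(\ell-1)$ prefix extended by $z_\ell$ is $z$. This is a search over $2^\ell$ candidates with unit-information queries.

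To finish, run an adversary or averaging argument: choose $z$ uniformly. An algorithm making $q$ queries hits $z$'s path with probability at most $q/2^{\ell-1}$. Conditioned on missing, its crediting distribution on output $z$ equals its crediting distribution on a uniform output, which must be small by the budget below $1/2$. This contradicts Step 2 unless $q=\widetilde\Omega(2^\ell)$.

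The main obstacle is making Step 3 rigorous when the algorithm adaptively queries after seeing its own sampled output $y$. If $y=z$, which happens with probability about $2^{-\ell}$, the algorithm can query along $y$'s path and detect the boost. This escape route must be closed. I would handle it by conditioning on $y$ and invoking CCA's requirement over the whole output distribution: the algorithm must credit $z$ when $y=z$, and the same sampled path appears for every $z'$ with probability $2^{-\ell}$. Alternatively, I would spread the boost across many strings (a random set of size around $2^{\ell}/\mathrm{poly}$) so that detection via the sampled path alone cannot certify correctness. Balancing the boost size against $\eps$ and the $1/2$ credit slack is where the $\widetilde{\Omega}$ logarithmic losses would enter.
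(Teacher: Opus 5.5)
There is a genuine gap. Step 1 is false. Two distinct distributions cannot differ at a single string, so boosting $z$ must depress another output. In your last-token version that output is the sibling $z'$ (last bit flipped), with $p_{z'}/q_{z'}=2-b$. Requiring $b>e^{\eps}$ forces $2-b<2-e^{\eps}\le e^{-\eps}$. (A last-token boost also has $b\le 2$, so it cannot exceed $e^{\eps}$ at all once $\eps\ge\ln 2$. Boosting along the whole path instead would reveal $z$ to $\ell$ probability queries starting from the root.)

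With $\delta=0$, an uncredited output has conditional probability $r_yp_y/R$, where $R=\Pr[s_1\notin C]$. So the constraint $\Pr[z'\mid s_1\notin C]\ge e^{-\eps}q_{z'}$ forces $R\le e^{\eps}(2-b)<1$. That is a constant amount of credit, independent of $\ell$. It cannot sit on $z$, which has mass $O(2^{-\ell})$, so it must be spread over essentially all outputs. Hence crediting only $z$ is not CCA, and $\mathsf{OPT}=\Theta(1)$, not $O(2^{-\ell})$. This is exactly the binding constraint in the paper's LP: the depressed string $\z\concat 0$ has ratio $e^{-\eps}(1-\gamma)$, which makes the optimal credit exactly $\gamma$ on prompts $\xvec\prefix\z$ and $0$ otherwise (Lemma~\ref{lem:retrofit-lb-instance-opt}).

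Second, the escape route you flag in Step 3 is fatal, not technical, and neither of your fixes closes it. A retrofitter can compute $p_y/q_y$ for its own sampled $y$ with $O(\ell)$ probability queries along $y$'s path. So handling $y=z$ correctly never requires search. The required global credit level is a known function of the family parameters. Consequently, on any prompt whose position relative to $\z$ is known, such as the empty prompt (always a prefix of $\z$), an optimal augmentation costs only $O(\ell)$ queries. It credits with $\z$-independent probabilities that depend only on whether $y$ is $\z\concat 0$, $\z\concat 1$, or neither.

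The missing idea is to put the hardness in the prompt. Whether prompt $\xvec$ requires credit $\gamma$ or $0$ depends on whether $\xvec\prefix\z$, a property of the whole subtree below $\xvec$ that no sampled path reveals. The paper exploits this through a reduction. An $\alpha$-approximate retrofit oracle with $\alpha<\gamma/2$ recovers $\z$ bit by bit, by estimating credit probabilities on $\xvec\concat 0$ and $\xvec\concat 1$, using $O(\ell\log\ell/(\gamma-2\alpha)^2)$ calls (Lemma~\ref{lemma:find-z-alpha-approx}). Finding $\z$ from $\M_\z$ alone takes $\Omega(2^\ell)$ queries (Lemma~\ref{lem:lb-opt-find-z-worst-case}). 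Composing the two, some call to the retrofitter must make $\Omega(2^\ell/\ell\log\ell)$ queries.
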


\subsection{Other related work}
Our paper builds on the work of \citet{livni2024credit} which introduced Counterfactual Credit Attribution. 
We briefly highlight two other broadly related lines of work.
First is work on DP LLMs, discussed further in \S\ref{sec:cca}. We take inspiration from the common approach to building DP LLMs of composing  a private next-token predictor to get end-to-end privacy guarantees~\citep{majmudar2022differentially,amin2024private}. 
Second is the large body of work on \emph{data attribution}~\citep{deng2025survey}.
For example, many approaches are based on Shapley values~\citep{ghorbani2019data}. 
CCA is coarser---it treats credit as binary---and is formulated differently.
With some exceptions~\citep{nematov2025source}, most work in this direction is concerned with training data attribution, rather than attribution for a deployment-time dataset.

%% file: sections/prelim.tex
\section{Preliminaries}\label{sec:prelim}
    
\paragraph{Notation}

Let $\tokenspace$ be a token vocabulary, the set of symbols from which model inputs/prompts and outputs/generations are constructed. 
We assume $\tokenspace$ is finite and there exists a special end-of-sequence token $\bot$. The set of finite sequences of tokens is denoted $\X^\ast$, and $\emptystr\in \X^\ast$ is the empty sequence.
We usually denote tokens by $x,y\in \tokenspace$, and write sequences of tokens as $\xvec\in \tokenspace^\ast$. 
For $\xvec,\xvec' \in \X^\ast$, $\xvec \concat \xvec'$ denotes their concatenation. 
$\xvec \sqsubseteq \xvec'$ means $\xvec$ is a prefix of $\xvec'$.

A dataset $S \sub \datauniverse$ 
is a set of \emph{documents} $s$ from some data universe $\datauniverse$. The set of possible datasets $S$ is denoted $\datasetspace$.

For distributions $P$ and $Q$ defined over the same space and constants $\eps,\delta\ge 0$, we say $P\approx_{\eps,\delta}Q$ if for all events $E$ we have $P(E) \le e^{\eps}\cdot Q(E) + \delta$ and $Q(E) \le e^{\eps}\cdot Q(E) + \delta$.

For a randomized algorithm $A:\mathcal{W} \to \Omega$, we denote by $A(w)$ the distribution over $\Omega$ generated by running $A$ on input $w$. For an outcome $o\in \Omega$, we denote by $A(o\mid w)$ the probability mass of $o$ in $A(w)$, and denote by $o\sim A(w)$ a sample from $A(w)$. 
We use this notation for models $\M$, $\G$ and their credit-attributing counterparts $\cM$, $\cG$.
For $A$ as above, we allow an oracle algorithm $B^A$ to make two types of queries to $A$: (i) On input $w$, sample an outcome from the distribution $A(w)$, and (ii) On input $w$, $o$, compute the probability $A(o \mid w)$.

\paragraph{Next-Token Prediction and Autoregressive Composition}

A next-token predictor $\M : \datasetspace \times \tokenspace^\ast \to \tokenspace$ is a randomized algorithm mapping a dataset $S$ and a prompt $\xvec_0\in \X^*$ to (a distribution over) a next token: $x\sim \M(S,\xvec_0)$.
We assume that if $\xvec_0$ contains $\bot$, then
$\M(\bot \mid S,\xvec_0) = 1$.

When composed with itself autoregressively (see Algorithm~\ref{algo:ar-m}), a next-token predictor $\M$ induces a randomized generative model $\gM:\datasetspace \times \tokenspace^\ast \to \tokenspace^*$. 
We call $\gM$ the \emph{rollout} of $\M$. The rollout $\gM$ iteratively samples a next token $x\sim\M(S,\xvec)$ by calling $\M$ on the current prompt $\xvec$ and updating the prompt $\xvec = \xvec \concat x$ until it samples the end of sequence token $\bot$. 
The resulting distribution over generated sequences is denoted $\gM(S,\xvec_0)$, where $\xvec_0$ is the initial prompt. By construction $\xvec_0$ is always a prefix of the output.

\begin{algorithm}
\caption{$\gM$:\ rollout of next-token predictor $\M$}
\begin{algorithmic}[1]
\INPUT dataset $S$, prompt $\xvec_0$
\OUTPUT generated sequence $\xvec$
\STATE $y \gets \emptystr$ \COMMENT{init.\ to empty string}
\STATE $\xvec \gets \xvec_0$ 
\WHILE{$y \neq \bot$}
  \STATE $y \sim M(S, \xvec)$ \COMMENT{sample next token from $M$}
  \STATE $\xvec \gets \xvec \concat y$
\ENDWHILE
\STATE {\bfseries return }$\xvec$
\end{algorithmic}
\label{algo:ar-m}
\end{algorithm}

\section{Counterfactual Credit Attribution}\label{sec:cca}
\citet{livni2024credit} introduce the notion of \emph{counterfactual credit attribution} (CCA), a relaxation of differential privacy, for data-dependent algorithms.

\begin{definition}[Credit attributing algorithm]\label{def:ca-algo}
Let $\c{A}:\datasetspace \to \Y \times \datasetspace$ be a randomized algorithm which takes as input a dataset $S \sub \datauniverse$ and returns a pair $(y,C)$ for some output domain $\Y$. 
$\c{A}$ is \emph{credit-attributing} (\emph{crediting} for short) if the \emph{credit set} $C$ is always a subset of the input dataset $S$.
\end{definition}

We indicate credit attributing algorithms with a $\widetilde{\text{tilde}}$.
We view $y$ as the useful output of $\c{A}$---it has utility even without $C$. 
The credit set $C\sub S$ indicates to which documents $\c{A}$  attributes credit for the output $y$.
We use ``$\c{A}(S)$ credits $s_i$'' to denote the event that $s_i \in C$ for $(y,C)\sim \c{A}(S)$.

A credit-attributing algorithm $\c{A}$ is a \emph{counterfactual credit attributor} if the output distribution when a document is not credited is close to the counterfactual where the document had been excluded from the input dataset. 
Closeness is measured as in $(\eps,\delta)$-differential privacy.

\begin{definition}[Counterfactual Credit Attribution~\citep{livni2024credit}]\label{def:cca-plain}
    Let $\eps,\delta \ge 0$. A (crediting) algorithm $\c{A}:\datasetspace \to \Y\times\datasetspace$ is an \emph{$(\eps,\delta)$-counterfactual credit attributor (CCA)} if for all $S \sub \datauniverse$ and every $s_i \in S$ the following holds: either $\Pr[\c{A}(S) \credits s_i] = 1$ or\footnote{The former prevents conditioning on zero probability events.}
        \[\c{A}(S^{-i}) \approx_{\eps,\delta} \c{A}(S_{-i}),\]
    where $\c{A}(S^{-i})$ is the output distribution on the dataset $S$ conditioned on $s_i \notin C$, and $\c{A}(S_{-i})$ is the output distribution on the dataset $S_{-i} = S\setminus \{s_i\}$. 

    When $\delta = 0$, we write $\eps$-CCA instead of $(\eps,0)$-CCA.
\end{definition}

\subsection{Applying CCA to generative models}
Broadly, there are three ways to apply CCA to generative models: to the training algorithm, to the deployed model, or end-to-end. As we explain next, our work focuses on deployment-time CCA.

\textit{Training-time CCA:} The straightforward application of \citet{livni2024credit} to generative models requires the training algorithm to be CCA with respect to the training data (analogous to DP training~\cite{dpSGD}). The semantics are easy to understand: the output model parameters are credited to training data items. But that's not what we're after. We want credit to be particularized to individual generated outputs, rather than a fixed credit set for the model as a whole.

\textit{Deployment-time CCA:} We apply CCA to the deployed generative model. We model the generation algorithm as having access to a deployment-time dataset other than the pretraining or fine-tuning dataset. For instance, a database used for retrieval-augmented generation (RAG) or examples for in-context learning included in a system prompt (for DP analogues see \citet{tang2024privacypreserving,yao2025private,grislain2025rag}). The CCA model generates outputs and attributes credit to items in the deployment-time dataset. This is a good starting point for studying CCA generative models: it is simple to formalize and particularizes credit for individual generated outputs. A drawback is that it only attributes credit to deployment-time data, not training data.

\textit{End-to-end CCA:} To attribute individual generated outputs to particular training data items, one might try applying CCA to the process which first trains a model then interactively answers queries. While promising, analogous versions of DP have proved quite subtle \cite{dwork2018privacy, shariff2018differentially, vietri2020private, bassily2018model, kaplan2023black, naor2023private}. CCA versions of these results would only be more difficult, a fact compounded by our result that CCA does not always compose (Theorem~\ref{ex:cca-ar-non-composition}). We leave this direction for future work.

We generalize the definition of CCA to allow a credit-attributing algorithm (the generative model) to take a prompt as input, in addition to the dataset being credited.
\begin{definition}[CCA with prompts]\label{def:cca-prompts}
Let $\eps,\delta \ge 0$. 
An algorithm $\c{A}: \datasetspace \times \X^\ast \to \Y \times \datasetspace$ is \emph{$(\eps,\delta)$-CCA} if for all prompts $\xvec \in \X^\ast$, the algorithm $\c{A}(\cdot, \xvec):\datasetspace \to \Y \times \datasetspace$ is $(\eps,\delta)$-CCA.
\end{definition}

That is, $\c{A}$ is $(\eps,\delta)$-CCA if for all $\xvec\in \X^\ast$, all $S \sub \datauniverse$, and all $s_i \in S$ the following holds: either $\Pr[\c{A}(S,\xvec) \credits s_i] = 1$ or 
\[\c{A}(S^{-i},\xvec) \approx_{\eps,\delta} \c{A}(S_{-i},\xvec),\]
where $\c{A}(S^{-i},\xvec)$ and $\c{A}(S_{-i},\xvec)$ are defined as in Definition~\ref{def:cca-plain}.

%% file: sections/cca-ntp.tex
\section{Imposing CCA on Next-Token Predictor}\label{sec:cca-ntp}
A natural starting point for building CCA generative models is to reduce the problem to {CCA next-token prediction}, following the usual approach for DP LLMs and LLMs generally.
First, design a \emph{crediting} next-token predictor $\cM$. 
Then sample outputs from the \emph{crediting rollout} $\gcM$ of $\cM$ (Definition~\ref{def:crediting-rollout}). 
In words, $\gcM$ runs $\cM$ autoregressively to generate a sequence of tokens $\{y_j\}$ and credit sets for those tokens $\{C_j\}$. 
Then it returns the concatenated tokens $y_1 \| y_2 \| \dots$ and the union of the credit sets $C_1 \cup C_2 \cup \dots$.

For this to work, we need ($\cM$ is CCA) $\implies$ ($\gcM$ is CCA). Like differential privacy, CCA would have to \emph{compose} under autoregressive generation.

Unfortunately and perhaps surprisingly, Theorem~\ref{ex:cca-ar-non-composition} shows that CCA does not compose autoregressively.
We give a simple counterexample:
for every $\eps\ge 0$ and $\delta < 1$, we describe a $(0,0)$-CCA next-token predictor $\cM$ whose rollout $\gcM$ is not $(\eps,\delta)$-CCA.

The counterexample is an instance of Theorem~\ref{thm:lb-eps-ar-cca} that lower bounds the possible CCA parameter $\eps'$ of the rollout of an $\eps$-CCA model $\cM$.
All else equal, the lower bound on $\eps'$ gets stronger as $\eps \to 0$.
This is counterintuitive: one would expect that as $\cM$ gets better at attributing credit, the rollout would too.
Hence, perhaps taking $\cM$ to be $(0,0)$-CCA as in our counterexample may be the hardest case for autoregressive composition.

Formalizing the preceding discussion, we begin by formally defining the (credit-attributing) rollout $\gcM$ of a (credit-attributing) next-token predictor $\cM$.
\begin{definition}[Credit-attributing rollout]\label{def:crediting-rollout}
    Let $\cM: \datasetspace \times \tokenspace^\ast \to \tokenspace \times \datasetspace$ be a credit-attributing next-token predictor. 
    The \emph{credit-attributing rollout} $\gcM:\datasetspace\times \X^* \to \X^* \times \datasetspace$ of $\cM$ is described by \Cref{algo:ar-m-with-credit}.
\end{definition}

\begin{algorithm}
\caption{$\gcM$:\ crediting rollout of $\cM$}
\begin{algorithmic}[1]
\INPUT dataset $S$, prompt $\xvec_0$
\OUTPUT generated sequence $\xvec$, credit set $C$
\STATE $y \gets \emptystr$ \COMMENT{init.\ to empty string}
\STATE $\xvec \gets \xvec_0$ 
\STATE $C \gets \emptyset$ 
\WHILE{$y \neq \bot$}
  \STATE $(y, C') \sim \cM(S, \xvec)$\;
  \STATE $\xvec \gets \xvec \concat y$
  \STATE $C \gets C \cup C'$\; 
\ENDWHILE
\STATE {\bfseries Output }($\xvec, C$)
\end{algorithmic}
\label{algo:ar-m-with-credit}
\end{algorithm}

\subsection{Counterexample for CCA composition} 
The following theorem shows that CCA does not compose autoregressively. Even if the starting next-token predictor satisfies the most stringent parameters, $(0,0)$-CCA, no CCA guarantee can be made about its rollout.

\begin{theorem}\label{ex:cca-ar-non-composition}
For all $\eps\ge0$, $0\le \delta < 1$, 
there exists a credit-attributing next-token predictor $\cM$ satisfying $(0,0)$-CCA whose credit-attributing rollout $\gcM$ is not $(\eps,\delta)$-CCA. 
\end{theorem}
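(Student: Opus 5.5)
The plan is to build an explicit counterexample. The next-token predictor $\cM$ will satisfy $(0,0)$-CCA at every single step. Its rollout will not, because conditioning the whole rollout on ``$s$ is never credited'' reweights the different branches of the generation tree. At each prefix $\xvec$, per-step $(0,0)$-CCA only forces the conditional law $\cM(S^{-i},\xvec)$ to match $\cM(S_{-i},\xvec)$. The rollout conditioned on non-crediting instead weights a path by the product of the counterfactual next-token probabilities times the probabilities $\Pr[\text{not credited at step } j\mid \text{prefix}]$, renormalized. If these non-crediting probabilities differ across branches, the rollout's conditional distribution is distorted away from the counterfactual rollout. The extreme case, where one branch credits with probability $1$, is still allowed per step because of the exemption in Definition~\ref{def:cca-plain}.

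Concretely, I fix a distinguished document $s^\ast\in\datauniverse$ and two tokens $\tta,\ttb\in\X\setminus\{\bot\}$. Given $\eps\ge 0$ and $\delta<1$, I pick $\eta\in(0,1-\delta)$. The predictor $\cM(S,\xvec)$ never credits anything except $s^\ast$, and it behaves as follows, depending on $\xvec$ relative to a fixed prompt $\xvec_0$ containing no $\bot$:
\begin{itemize}
\item On $\xvec=\xvec_0$, it outputs $\tta$ with probability $1-\eta$ and $\ttb$ with probability $\eta$, with empty credit, regardless of $S$.
\item On $\xvec=\xvec_0\concat\tta$, it outputs $\bot$; if $s^\ast\in S$ it credits $\{s^\ast\}$ with probability $1$, and otherwise it credits nothing.
\item On $\xvec=\xvec_0\concat\ttb$, it outputs $\bot$ with empty credit.
\item On every other prompt, it outputs $\bot$ with empty credit. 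This automatically handles prompts containing $\bot$.
\end{itemize}

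Next I verify that $\cM$ is $(0,0)$-CCA in the sense of Definition~\ref{def:cca-prompts}. Fix any $\xvec$, any $S$, and any $s_i\in S$. If $s_i\neq s^\ast$, then $s_i$ is never credited, and the output law of $\cM$ depends on $S$ only through whether $s^\ast\in S$. Hence $\cM(S^{-i},\xvec)=\cM(S_{-i},\xvec)$ exactly. If $s_i=s^\ast$, there are two cases. On the prompt $\xvec_0\concat\tta$, $s^\ast$ is credited with probability $1$, so the exemption applies. On every other prompt, $s^\ast$ is never credited and the token law does not depend on $S$, so again the two distributions coincide. The one subtlety is that Definition~\ref{def:cca-plain} compares distributions over pairs $(y,C)$. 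In the uncredited branches the credit sets on the two sides must therefore agree, which holds because $C=\emptyset$ on both.

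Finally I show the rollout fails $(\eps,\delta)$-CCA, taking $S=\{s^\ast\}$, prompt $\xvec_0$, and $s_i=s^\ast$.
\begin{itemize}
\item $\gcM(S,\xvec_0)$ credits $s^\ast$ exactly when the first token is $\tta$, which has probability $1-\eta<1$. So the exemption does not apply.
\item Conditioned on $s^\ast\notin C$, the output is $\xvec_0\concat\ttb\concat\bot$ with probability $1$.
\item The counterfactual $\gcM(\emptyset,\xvec_0)$ outputs $(\xvec_0\concat\tta\concat\bot,\emptyset)$ with probability $1-\eta$.
\end{itemize}
Taking $E$ to be this outcome gives $1-\eta\le e^{\eps}\cdot 0+\delta$, i.e.\ $1-\eta\le\delta$, which contradicts the choice $\eta<1-\delta$. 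So no $(\eps,\delta)$-CCA guarantee holds for any $\eps$.

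I expect the only real obstacle to be bookkeeping in the per-step CCA check: the definition quantifies over all datasets, documents and prompts, so $\cM$ must be specified everywhere, including prompts containing $\bot$. Routing all dependence on $S$ through the single indicator $[s^\ast\in S]$ makes that check uniform.
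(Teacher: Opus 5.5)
Your proposal is correct and is essentially the paper's argument. It is a single-document, two-token counterexample in which one branch of the generation tree credits $s^\ast$ with probability $1$, so per-step $(0,0)$-CCA holds there by the exemption. The other branch never credits, so conditioning the rollout on non-crediting collapses it onto that branch.

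The differences are cosmetic. You drop the paper's probability-$\frac12$ crediting step. You also witness the violation with the event of conditional probability $0$, via $Q(E)\le e^{\eps}P(E)+\delta$. This relies on the intended symmetric reading of $\approx_{\eps,\delta}$, since the second inequality in the preliminaries is mistyped. It has the side benefit that one model fails for every $\eps$ at once. The paper instead uses the event $\{\tta\ttb\}$, which has conditional probability $1$, and therefore needs $p<e^{-\eps}(1-\delta)$.
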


\begin{proof}[Proof of Theorem~\ref{ex:cca-ar-non-composition}]

Fix the data universe $\datauniverse =\{s_1\}$ and the token set $\X = \{\tta,\ttb\}$. 
Let $0<p < e^{-\eps}\cdot(1-\delta)$.
Define a credit-attributing next-token predictor $\cM$ as follows.

For the empty prompt $\xvec = \emptystr$:\vspace{-1em}
\[\cM(\{s_1\},\emptystr) \defeq \cM(\emptyset,\emptystr) \defeq 
\begin{cases}
    (\tta,\emptyset) & \wp p \\
    (\ttb,\emptyset) & \wp 1-p \\
\end{cases}.\]
For prompt $\xvec = \tta$:\vspace{-1em}
\begin{align*}
    \cM(\{s_1\},\tta) &\defeq 
        \begin{cases}
            (\tta,\{s_1\}) & \wp \frac12 \\
            (\ttb,~~~\emptyset~~~) & \wp \frac12 \\
        \end{cases}, \\
    \cM(\emptyset,\tta) &\defeq (\ttb,\emptyset)
\end{align*}
For prompt $\xvec = \ttb$:
\begin{align*}
    \cM(\{s_1\},\ttb) &\defeq (\tta,\{s_1\}), \\
    \cM(\emptyset,\ttb) &\defeq (\tta,\emptyset)
\end{align*}
For all other prompts $\xvec$:
\[\cM(\{s_1\},\xvec) \defeq \cM(\emptyset,x) \defeq (\bot,\emptyset).\]

\paragraph{The next-token predictor $\cM$ is CCA} 
We now prove that $\cM$ is $(0,0)$-CCA by showing that for all prompts $\xvec$, the restriction $\cM(\cdot, \xvec)$ is $(0,0)$-CCA. 
Let $S = \{s_1\}$ and $S_{-1} = \emptyset$.

For $\xvec \not\in\{\tta,\ttb\}$, $\cM(S,\emptystr)$ never credits anything:
\[\cM(S^{-1},\xvec) = \cM(S,\xvec) = \cM(\emptyset,\xvec) = \cM(S_{-1},\xvec).\]
For $\xvec = \tta$, conditioned on not crediting $s_1$, $\cM(S,\tta)$ always outputs $(\ttb,\emptyset)$:
\[\cM(S^{-1},\tta) = (\ttb,\emptyset) = \cM(\emptyset,\tta) = \cM(S_{-1},\tta).\]
For $\xvec = \ttb$, $\cM(S,\ttb)$ always credits $s_1$:
\[\Pr[\cM(S,\ttb) \credits s_1] =1.\]
\paragraph{The crediting rollout $\gcM$ is not CCA} 
We now prove that the credit-attributing rollout $\gcM$ of $\cM$ is not $(\eps,\delta)$-CCA. It is easy to check that on the empty prompt $\emptystr$, $\gcM$ is:
\begin{align*}
    \gcM(\{s_1\},\emptystr) &= \begin{cases}
        \left(\tta\tta,\{s_1\}\right) & \wp \frac12{p}\\
        \left(\tta\ttb,~~~\emptyset~~~\right) & \wp \frac12{p}\\
        \left(\ttb\tta,\{s_1\}\right) & \wp 1-p
    \end{cases},\\
\gcM(\emptyset,\emptystr)&= \begin{cases}
        \left(\tta\ttb,~~~\emptyset~~~\right) & \wp p\\
        \left(\ttb\tta,~~~\emptyset~~~\right) & \wp 1-p
    \end{cases}.
\end{align*}
Let $S = \{s_1\}$. 
Conditioning on not crediting $s_1$, we have $\Pr[\gcM(S^{-1},\emptystr) = (\tta\ttb,\emptyset)]=1$.
In the counterfactual, $\Pr[\gcM(S_{-1},\emptystr) = (\tta\ttb,\emptyset)] = p<e^{-\eps}(1-\delta)$.
Hence, $\gcM(S^{-1},\emptystr) \not\approx_{\eps,\delta} \gcM(S_{-1},\emptystr)$.
\end{proof}

\subsection{Lower bound for $\eps$-CCA autoregressive composition}
The example in the previous section is a special case of the
Theorem~\ref{thm:lb-eps-ar-cca} below, which gives a lower bound of the CCA parameter $\eps'$ that the rollout $\gcM$ of a crediting next-token predictor $\cM$ can achieve (for $\delta=0$).

We highlight the counterintuitive dependence on $\eps$: the lower bound on $\eps'$ gets stronger as $\eps\to 0$. However, we caution that $\eps$ is endogenous to the model $\cM$. Changing $\eps$ may change other terms in the lower bound. This complicates the interpretation of the already complex theorem statement.

\begin{restatable}{theorem}{CompositionLowerbound}
\label{thm:lb-eps-ar-cca}
Suppose the next-token predictor $\cM$ is $\eps$-CCA and its rollout $\gcM$ is $\eps'$-CCA. 
Given any dataset $S$, document $s_i \in S$ and prompt $\xvec_0$, 
either: (1) $\cM$ credits $s_i$ with probability $1$, or (2) $\eps'$ is at least 
\[
\max_{(\xvec^{-i},C^{-i})}
\left\{
\ln\left(
\frac{\prod_{j=1}^{|\xvec^{-i}|}{\Pr\left[E_j \mid x_1^{-i}\dots x_{j-1}^{-i}\right]}}
{\Pr\left[s_i \not \in C\right]} 
\right)
-|\xvec^{-i}|\cdot \eps 
\right\}
\]
where the $\max$ is over $(\xvec^{-i},C^{-i}) \in \supp(\gcM(S^{-i},\xvec))$ (i.e., all outputs where $s_i$ is not credited), the probabilities are over the randomness of $\gcM$, and
$E_j$ is the event that $s_i$ is not credited in the $j$th step of the rollout (i.e., $s_i \not\in C'_j$ for $(x_j,C'_j)\sim \cM(S,x_1^{-i}\|\dots\|x_{j-1}^{-i})$).
\end{restatable}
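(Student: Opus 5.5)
Fix $S$, $s_i\in S$, and the prompt $\xvec_0$, and assume $\Pr[\gcM(S,\xvec_0)\credits s_i]<1$, so that $\gcM(S^{-i},\xvec_0)$ is well defined; this is how we read alternative (1). The plan is to look at one output $(\xvec^{-i},C^{-i})$ in the support of $\gcM(S^{-i},\xvec_0)$ and apply the $\eps'$-CCA inequality of the rollout to the singleton event consisting of the generated sequence $\xvec^{-i}=x_1^{-i}\dots x_m^{-i}$, with credit set ignored. For $\delta=0$ this gives
\[
\Pr[\gcM(S^{-i},\xvec_0)\text{ outputs }\xvec^{-i}] \le e^{\eps'}\Pr[\gcM(S_{-i},\xvec_0)\text{ outputs }\xvec^{-i}] .
\]
Rearranging, $\eps'$ is at least the log of the ratio of these two probabilities. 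We then lower bound the numerator and upper bound the denominator separately, and finally maximize over the choice of output.

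\textbf{Numerator.} Let $P_S(\cdot)$ denote probabilities under the unconditioned rollout on $S$. By the definition of conditioning,
\[
\Pr[\gcM(S^{-i},\xvec_0)=\xvec^{-i}]=\frac{P_S(\text{output }\xvec^{-i}\text{ and } s_i\notin C)}{\Pr[s_i\notin C]} .
\]
The event ``$s_i\notin C$ and the output is $\xvec^{-i}$'' means that at every step $j$ the crediting predictor $\cM(S,\xvec_0\|x_1^{-i}\cdots x_{j-1}^{-i})$ outputs $x_j^{-i}$ and does not credit $s_i$, because $C$ is the union of the per-step credit sets. By the chain rule over Algorithm~\ref{algo:ar-m-with-credit}, its probability factors as
\[
\prod_j \Pr[E_j\mid\cdot]\cdot \Pr[x_j=x_j^{-i}\mid E_j,\cdot] .
\]
The second factor in each term is exactly the conditional next-token distribution $\cM(S^{-i},\cdot)$ evaluated at $x_j^{-i}$.

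\textbf{Denominator.} The counterfactual rollout on $S_{-i}$ factors as $\prod_j \cM(S_{-i},\cdot)(x_j^{-i})$. At each step where $\Pr[E_j]<1$, the per-step $\eps$-CCA of $\cM$, applied to the singleton token event, gives
\[
\cM(S_{-i},\cdot)(x_j^{-i})\le e^{\eps}\,\cM(S^{-i},\cdot)(x_j^{-i}) .
\]
So the denominator is at most $e^{|\xvec^{-i}|\eps}\prod_j \cM(S^{-i},\cdot)(x_j^{-i})$. This is the step that produces the $-|\xvec^{-i}|\eps$ term. Taking the ratio, the conditional token probabilities cancel, leaving
\[
\eps' \ge \ln\frac{\prod_j\Pr[E_j\mid x_{<j}^{-i}]}{\Pr[s_i\notin C]}-|\xvec^{-i}|\eps .
\]
We then maximize over the support.

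\textbf{Main obstacle.} The delicate point is the steps where $\cM$ credits $s_i$ with probability $1$. There the per-step CCA condition is vacuous and $\cM(S^{-i},\cdot)$ is undefined. Such a step cannot occur along an output in $\supp(\gcM(S^{-i},\xvec_0))$, since $E_j$ must have positive probability for that output to arise while $s_i$ stays uncredited. So every relevant step falls into the non-vacuous case, and I would check this explicitly. A second minor point is that the theorem quantifies ``$\cM$ credits with probability $1$'', which I interpret as $\gcM$ crediting $s_i$ with probability $1$. I would also verify that the counterexample in Theorem~\ref{ex:cca-ar-non-composition} is recovered: there $\eps=0$, there is one uncredited step along $\tta\ttb$ with $\Pr[E_2]=1/2$ and $\Pr[E_1]=1$, and $\Pr[s_i\notin C]=p/2$, giving $\ln(1/p)$.
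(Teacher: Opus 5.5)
Your proposal is correct and follows essentially the paper's argument: chain-rule factorization of the rollout, a factor $e^{\eps}$ per step from the $\eps$-CCA of $\cM$, division by $\Pr[s_i\notin C]$, and the rollout's $\eps'$-CCA applied to a single outcome. The main difference is granularity: you work with token-sequence events with credit sets marginalized out, whereas the paper applies per-step CCA to full pairs $(x_j,C'_j)$ and then sums over traces consistent with $(\xvec^{-i},C^{-i})$, a step your version avoids. One small slip: the per-step bound $\cM(S_{-i},\cdot)(x_j^{-i})\le e^{\eps}\,\cM(S^{-i},\cdot)(x_j^{-i})$ holds whenever $\Pr[E_j]>0$, including $\Pr[E_j]=1$, not only when $\Pr[E_j]<1$. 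Your obstacle paragraph already shows $\Pr[E_j]>0$ at every step along a supported output, so the bound applies at all $|\xvec^{-i}|$ steps, which is what you need.
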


In the example of the proof of Thm~\ref{ex:cca-ar-non-composition}, we have $\Pr[s_1 \in C] = \frac{1}{2}p$, and for $(\xvec^{-1},C^{-1}) = (\tta \ttb, \emptyset)$:
\begin{align*}
    &\prod_{j=1}^{|\xvec^{-1}|}{\Pr\left[E_j \mid x_1^{-1},\dots,x_{j-1}^{-1}\right]} \\
    = &\Pr\left[\cM(S,\emptystr) \text{ not credit } s_1\right]\Pr\left[\cM(S,\tta) \text{ not credit } s_1\right] 
    = 1 \cdot \frac{1}{2}.
\end{align*}
Plugging in $\eps = 0$, we get
\(\eps' \ge \ln \left(\frac{1/2}{p/2}\right) - 2\cdot0 = -\ln p.\)

%% file: sections/retrofit.tex
\section{Retrofitting Credit on Existing Autoregressive Model}\label{sec:retrofit}
In this section, we consider the problem of retrofitting credit on an existing autoregressive model: add credit sets as an additional output of the model in a way that satisfies CCA, while not otherwise changing the model and minimizing the  \emph{cost} of attributing credit. Starting with a (non-crediting) next-token predictor, a solution to the problem is called a \emph{credit-optimal CCA augmentation}. 

This section shows retrofitting is hard: one cannot efficiently implement a credit-optimal CCA augmentation using black-box access to the next-token predictor. 
It takes exponentially-many queries in the worst case.
In fact, one cannot even approximate the optimal probability of crediting a single document $s\in S$ within a factor of $\pm 1/2$.

This section is organized as follows.
\S\ref{sec:retrofit:problem} defines credit-optimal CCA augmentations and the CCA retrofitting problem.
\S\ref{sec:retrofit-main-theorem} states our main theorem and gives the high-level overview of the proof.
\S\ref{sec:retrofit-construction} gives our construction of a hard-to-retrofit family of models and characterizes the optimal CCA augmentation of those models.
\S\ref{sec:retrofit-putting-the-proof-together} proves the theorem.
Proofs of supporting lemmas are deferred to Appendix~\ref{appendix:retrofit}.

\subsection{Defining the CCA retrofitting problem}\label{sec:retrofit:problem}
This section defines the CCA retrofitting problem (Definition~\ref{def:retrofitting-problem}). Roughly, given a non-crediting model $\G^\M$ and turn it into a crediting model $\cG$ that satisfies CCA while (1) not otherwise changing its behavior (Definition~\ref{def:model-augmentation}), and (2) not attributing credit too often (Definition~\ref{def:credit-opt}).
We also consider an approximate version of the problem (Definition~\ref{def:alpha-approx}).

\paragraph{Model augmentation}
To capture the first condition, we introduce the concept of model \emph{augmentation}: $\cG$ augments $\G$ if it preserves the distribution over generated output. 

\begin{definition}\label{def:model-augmentation}
    Let $\G: \datasetspace \times \tokenspace^\ast \to \outputspace$ be a model. Let $\cG:\datasetspace \times \tokenspace^\ast \to \outputspace \times \datasetspace$ be a crediting model, and $\cG_Y: \datasetspace \times \tokenspace^\ast \to \outputspace$ be the model obtained by marginalizing $\cG$ to its $\outputspace$ component. 
    
    \emph{$\cG$ augments $\G$} if $\cG_Y = \G$. 
    $\cG$ is an \emph{$(\eps,\delta)$-CCA augmentation} of $\G$ if also satisfies $(\eps,\delta)$-CCA.
\end{definition}

\paragraph{Credit optimality}
CCA is trivially satisfied if one always credits everything, taking $C = S$. This would also be worthless: one might as well skip credit attribution altogether. Instead, we want models to be parsimonious in giving credit.
This could mean minimizing the expected number of items credited, the probability of crediting a particular data item $s_i$, or a cost function $f$.

We define \emph{credit-optimal} CCA augmentation, the CCA augmentation that minimizes the expected crediting cost $\E[f(C)]$ for a given nonempty dataset $S^\ast$.\footnote{This is a weak requirement, holding only for a particular $f$ and $S^\ast$.
A weak optimality definition strengthens the negative result.}

\begin{definition}\label{def:credit-opt}
A crediting model $\cGopt$ is a \emph{credit-optimal} $(\eps,\delta)$-CCA augmentation of $\G$ \emph{for cost function $f$ and nonempty dataset $S^\ast \sub \datauniverse$} if the following hold:
\begin{itemize}
    \item $\cGopt$ is an $(\eps,\delta)$-CCA augmentation of $\G$ 
    \item For all prompts $\xvec \in \tokenspace^\ast$, and for all $(\eps,\delta)$-CCA augmentations $\cG$ of $\G$,
    \[
        \E_{(y,C) \sim \cGopt(S^\ast,\xvec)}\big[f(C)\big] \leq \E_{(y,C) \sim \cG(S^\ast,\xvec)}\big[f(C)\big].
    \]
\end{itemize}
When clear from context, we say $\cGopt$ is \emph{credit-optimal} or \emph{optimal} for short.
\end{definition}

Looking ahead to our lower bound (Theorem~\ref{thm:retrofit-main}), we will take $\datauniverse \defeq \{s_1\}$. 
In this setting, there is only one nonempty dataset ($S^\ast = \{s_1\}$), and minimizing $\E[f(C)]$ for any nonzero cost function ($f(s_1) > 0$) is equivalent to minimizing $\Pr[s_1 \text{ is credited}]$.

\paragraph{Approximating models}

Our main result in this section is that it is hard to produce an optimal CCA augmentation of a given model. In fact, we show that it is even hard to \emph{approximate} an optimal CCA augmentation. To make this formal, we now define model approximation.
\begin{definition}\label{def:alpha-approx}
    Crediting model $\aG'$ is an (additive) \emph{$\alpha$-approximation} of $\cG$ if for all $S$, $\xvec$, and $s_i \in S$:
    \[\left|\Pr\left[\aG'(S,\xvec) \credits s_i\right] - \Pr\left[\cG(S,\xvec) \credits s_i\right] \right| \le \alpha.\]
\end{definition}

We emphasize that an approximation $\cG^\alpha$ to an optimal CCA augmentation of some model $\G$ need not itself be optimal, CCA, nor even an augmentation of $\G$.

\paragraph{\Retrofit}

We now formally define the CCA retrofitting problem.

\begin{definition}[\Retrofit]\label{def:retrofitting-problem}
Let $\eps,\delta\ge 0$, $f$ be a cost function, and $S^\ast\sub \datauniverse$ be a dataset. 
Let $\mathcal{M}$ be a collection of  next-token predictors.

The (exact) \emph{\Retrofit problem} with respect to $(\eps, \delta, f, S^\ast, \mathcal{M})$ is as follows: Given oracle access to $\M \in \mathcal{M}$, implement an oracle to a credit-optimal $(\eps,\delta)$-CCA augmentation $\cGopt$ of $\G^\M$ (optimality w.r.t.\ $f$, $S^\ast$).

For $\alpha \ge 0$, the \emph{$\alpha$-approximate \Retrofit problem} is the following relaxation: Given oracle access to $\M$, implement an oracle to an $\alpha$-approximation $\cG^\alpha$ of $\cGopt$ defined as above. 

\end{definition}

We give two clarifying remarks for the above definition.
First, oracle access to $\M$ provides two types of queries (\S\ref{sec:prelim}): sampling a token $y\sim\M(S, \xvec)$, or evaluating the probability of a token $M(y\mid S,\xvec)$. Observe, that sample access alone is sufficient to sample from the rollout $\G^\M$.

Second, to \emph{implement an oracle to $\cGopt$ (resp. $\cG^\alpha$)}, an algorithm is only required to sample from $\cGopt(S,\xvec)$ (resp. $\cG^\alpha(S, \xvec)$) on any input query $(S,\xvec)$.
The algorithm may use $\M$ as an oracle in this sampling procedure, and need not produce any explicit representation of the model $\cGopt$ (resp. $\cG^\alpha$).

\subsection{Approximate retrofitting requires exponentially-many queries}\label{sec:retrofit-main-theorem}

We now state the main theorem of this section: the number of oracle queries required to solve approximate \Retrofit, is exponential in a model's output length in the worst case.

\begin{theorem}[Hardness of \Retrofit]\label{thm:retrofit-main}
    Let $\eps \ge 0$, $\delta = 0$, and $\alpha < 1/2$. Let the data universe be a singleton $\datauniverse = \{s_1\}$, and $f$ be any nonzero cost function ($f(s_1) > 0$).
    
    There exists a family of models $\{\mathcal{M}_\ell\}_{\ell \ge 2}$ for which:
    \begin{enumerate}[(i)]
        \item Any algorithm that solves the $\alpha$-approximate \Retrofit problem for $\mathcal{M}_\ell$ requires $\Omega(2^\ell/\ell\log \ell)$ oracle queries, and 
        \item On some input prompt $\xvec_0$, the rollout of every model $\M \in \mathcal{M}_\ell$ always generates outputs of length $\ell+1$ excluding the $\bot$ at the last position.
    \end{enumerate}
\end{theorem}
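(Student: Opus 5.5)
Our plan is to build a family $\mathcal{M}_\ell$ of next-token predictors, indexed by a hidden string $\z\in\bits^\ell$ (or a small structured set of such strings), in which the optimal probability of crediting $s_1$ at prompt $\xvec_0$ depends sharply on whether $\z$ exists. We take $\X=\{\tta,\ttb,\bot\}$ and $\datauniverse=\{s_1\}$. On both $S=\{s_1\}$ and $S=\emptyset$ the model emits a uniformly random length-$\ell$ string over $\{\tta,\ttb\}$, followed by one extra token and then $\bot$; this gives part (ii). The two datasets differ only in the conditional distribution of the final token after the one special prefix $\z$. Because of this, the rollouts $\G^\M(S,\cdot)$ and $\G^\M(\emptyset,\cdot)$ agree except on the $2^{-\ell}$-mass set of outputs extending $\z$.

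The first step is to characterize the optimal $(\eps,0)$-CCA augmentation for a generic pair of output distributions $P=\G(S,\xvec)$ and $Q=\G(\emptyset,\xvec)$. Write $q(y)=\Pr[s_1\notin C\mid y]$. The augmentation constraint fixes the marginal $P$. The CCA constraint says that $P(y)q(y)/Z\in[e^{-\eps}Q(y),e^{\eps}Q(y)]$ for every $y$, where $Z=\sum_y P(y)q(y)$. Since $\delta=0$, pointwise constraints suffice. The task is therefore to maximize $Z$, a small linear-fractional program. In particular, if some output $y$ has $P(y)=0<Q(y)$, the lower bound forces $Z=0$, so the optimal solution credits with probability $1$.

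We then design the two cases. For the ``null'' model (or a model whose special prefix has matching behavior), $P=Q$ and $\mathsf{OPT}=0$. For the planted model $\M_\z$, the final token after $\z$ is chosen so that $P$ gives zero mass to some output $y^\ast=\z\concat\tta$ that $Q$ supports. Then $\mathsf{OPT}=1$, and this holds for every $\eps$. The gap in $\mathsf{OPT}$ between the two cases is therefore $1>2\alpha$, so an $\alpha$-approximation distinguishes them. It must distinguish them at the fixed prompt $\xvec_0$, on dataset $\{s_1\}$.

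The main step is the query lower bound for this distinguishing task, and it is where we expect the real difficulty. We would proceed by a standard adversary or hybrid argument. A query to $\M$ (sample or probability) on $(S,\xvec)$ reveals information about $\z$ only if $\xvec$ is the prefix $\z$ itself, or an extension of it. Each query therefore eliminates at most $O(1)$ candidates for $\z$. Random sampling from the rollout hits $\z$ with probability $2^{-\ell}$ per sample.

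The obstacle is accounting precisely for adaptive queries, and for the fact that the algorithm need only sample correctly rather than answer with certainty. Here we would argue that an algorithm with $T$ queries sees identical transcripts under the null model and under $\M_\z$ for all but $O(T)$ of the $2^\ell$ choices of $\z$. Its credit probability would then be within $\alpha$ of both $0$ and $1$, which is a contradiction. The stated $\Omega(2^\ell/(\ell\log\ell))$ suggests that the actual construction needs a slightly richer hidden structure, or that each query is charged $\ell\log\ell$ for encoding prefixes. We would adjust the counting, rather than the core argument, to match that bound.
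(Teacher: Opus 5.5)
Your proposal is correct, and it reaches the statement by a different and sharper route than the paper.

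\textbf{The paper's route.} The paper's family $\{\M_\z\}$ contains no null model. It uses a non-degenerate bias $\tfrac12+\tfrac{1-e^{-\eps}(1-\gamma)}{2}$ with $\gamma\in(0,1)$, so every output has positive mass under both datasets. An LP analysis then shows that the optimal augmentation credits $s_1$ with probability exactly $\gamma$ at prefixes of $\z$ and $0$ elsewhere. This analysis has two parts: an upper bound via $\min_y p_y/q_y$, and a sandwich/feasibility step. Because the root is a prefix of every $\z$, no single prompt separates members of that family. So the paper reduces through $\FindZ{\cdot}$:
\begin{itemize}
\item recovering $\z$ from $\M_\z$ costs $\Omega(2^\ell)$ queries by Yao's principle;
\item an $\alpha$-approximate retrofit oracle recovers $\z$ by walking down the tree with $O(\ell\log\ell)$ Hoeffding estimates. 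This step needs $\alpha<\gamma/2$, i.e.\ $\gamma\in(2\alpha,1)$.
\end{itemize}
Composing the two is the source of the $\ell\log\ell$ in the statement.

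\textbf{Your route.} You put a null model into $\mathcal{M}_\ell$. This is essential: without it your root optimum is $1$ for every $\z$, and you would be back to a tree walk. You also make the planted bias degenerate, so $P(\z\concat\tta)=0<Q(\z\concat\tta)$ forces crediting with probability $1$ by a one-line support argument, with no LP. A single prompt then separates $\mathsf{OPT}=0$ from $\mathsf{OPT}=1$, and a hybrid argument gives $T\ge(1-2\alpha)2^\ell$. That already implies $\Omega(2^\ell/\ell\log\ell)$, so your closing worry is unfounded and no ``adjustment of the counting'' is needed.

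\textbf{Trade-offs.} The paper's route gives a family whose hardness does not hinge on a support mismatch, which is a feature special to $\delta=0$. It also gives a tunable crediting probability $\gamma$, and the characterization Lemma~\ref{lem:retrofit-lb-instance-opt} that also drives Remark~\ref{remark:tv-0}. Yours gives a shorter proof and a bound with no $\ell\log\ell$ loss.

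\textbf{One step to tighten.} For a randomized, adaptive algorithm, the claim that transcripts agree for all but $O(T)$ values of $\z$ is not literally right; replace it with an averaged statement.
\begin{itemize}
\item Couple the runs of $A^{\M_{\mathrm{null}}}$ and $A^{\M_\z}$ on input $(\{s_1\},\xvec_0)$ so they coincide until the query $(\{s_1\},\z)$ is issued; every other query has identical answers.
\item The null run does not depend on $\z$ and issues at most $T$ distinct prompts, so $\sum_{\z}\Pr[A^{\M_{\mathrm{null}}}\text{ queries }(\{s_1\},\z)]\le T$.
\item Hence some $\z$ has output distributions within total variation $T/2^\ell$.
\item Correctness demands crediting probability at most $\alpha$ under the null model and at least $1-\alpha$ under $\M_\z$, hence $1-2\alpha\le T/2^\ell$.
\end{itemize}
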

The construction is in Sec.~\ref{sec:retrofit-construction} and the proof is in Sec.~\ref{sec:retrofit-putting-the-proof-together}

The high level approach of the proof is as follows. 
For a given $\ell$, we construct a family $\mathcal{M}_\ell = \{\M_\z\}$ indexed by strings $\z \in \bits^\ell$. We prove two things about this family:

\textbullet~\emph{Lemma~\ref{lem:lb-opt-find-z-worst-case} (informal):} Finding $\z$ given oracle access to $\M_\z$ requires at least $\Omega(2^\ell)$ queries to $\M_\z$.

\textbullet~\emph{Lemma~\ref{lemma:find-z-alpha-approx} (informal):}  Finding $\z$ given oracle access 
any $\alpha$-approximation of an optimal CCA augmentation of the rollout of $\M_\z$ requires only $O(\ell \log \ell)$ queries to the $\alpha$-approximation oracle. The crux of the proof is characterizing the optimal CCA augmentation (Lemma~\ref{lem:retrofit-lb-instance-opt}).

Therefore, any oracle algorithm $A$ that implements an $\alpha$-approximation of an optimal CCA augmentation (i.e., $A$ solves approximate \Retrofit) must make $\Omega(2^\ell/\ell\log\ell)$ queries to $\M_\z$ on average.

\begin{remark}
     For a relaxation of Definition \ref{def:model-augmentation} and Definition \ref{def:retrofitting-problem}, where for all $S \in \datasetspace, \xvec \in \tokenspace$, $\mathsf{TV}(\cGopt_Y(S, \xvec), \G(S, \xvec)) \leq 2d$, we conjecture that
     when $d$ vanishes sufficiently fast as a function of $\ell$, a similar lower bound can still be shown to hold by
     modifying the parameter of the model family we construct in the next section.
\end{remark}

\subsection{Construction of hard model family}\label{sec:retrofit-construction}
We define a family of next-token predictors $\mathcal{M}_\ell = \mathcal{M}_{\ell,\gamma,\eps} = \{M_\z\}$ with token space $\tokenspace = \{0,1,\bot\}$ and data universe $\datauniverse = \{s_1\}$.
The family is defined by parameters $\ell \ge 1$, $\gamma \in (0,1)$, and $\eps \ge 0$, and the models in the family are indexed by strings $\z \in \bits^{\ell}$. 
Looking ahead, $\gamma$ is the probability that $s_1$ is credited in optimal CCA augmentations (Lemma~\ref{lem:retrofit-lb-instance-opt}).

The next token distribution
$y\sim\M_\z$ is defined as follows:
\[M_\z(S,\xvec) \defeq \begin{cases}
    \bot & \text{if }\xvec\notin \bits^{\le \ell} \\
    \bern\left(\frac{1}{2} + {\frac{(1-e^{-\eps}(1-\gamma))}{2}}\right) & \text{if } S \neq \emptyset\land\xvec= \z \\
    \bern\left(\frac12\right) & \text{otherwise}
    \end{cases}
\]
For $\eps \ge0$ and $\gamma \in (0,1)$, we have $1-e^{-\eps}(1-\gamma) \in (0,1)$.

We denote by $\G_\z\triangleq \G^{\M_\z}$ the rollout of $\M_\z$.
$\G_\z$ always generates  $(\ell+1)$ bits including the end of generation token $\bot$.
On input $\xvec\neq \z$, the next bit is uniform. 
On the special input $\xvec = \z$, the next (and final) bit is biased towards 1 if $S=\datauniverse=\{s_1\}$ and uniform if $S = \emptyset$. 

Figure~\ref{fig:lb-family-eg} shows an example pair of generation trees with $\ell=2$, where the blue dashed edges are biased, while all other edges are uniform.

\input{sections/tikzfigures/lb-family}

The following lemma characterizes the optimal CCA augmentation $\cGopt_\z$ of the rollouts $\G_\z$ for next-token predictors in the family. 
If the prompt $\xvec$ is a prefix of $\z$, denoted $\xvec \sqsubseteq \z$, then $\cGopt_\z$ credits $s_1\in S$ with constant probability $\gamma$. Otherwise, it does not credit $s_1$. 
\begin{lemma}[Characterizing the optimal $\cGopt_\z$ for the family]\label{lem:retrofit-lb-instance-opt}
    Let $S^\ast = \{s_1\} = \datauniverse$ and let $f$ be any nonzero cost function.
    For all $\M_\z \in \mathcal{M}_{\ell,\gamma,\eps}$,  all  $\cGopt_\z$ optimal $\eps$-CCA augmentations of $\G_\z$, and all prompts $\xvec$:
    \[ \Pr\left[\cGopt_\z\bigl(S^\ast,\xvec\bigr) \text{ credits } s_1\right]= \begin{cases}
       \gamma & \xvec \sqsubseteq \z \\
         0 & otherwise
        \end{cases}.\]
\end{lemma}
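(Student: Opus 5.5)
The plan is to reduce everything to a separate optimization problem for each prompt. Since $\datauniverse=\{s_1\}$, the only nontrivial CCA constraint is for $S=S^\ast=\{s_1\}$. The dataset $\emptyset$ has nothing to credit, so any augmentation is forced there to output $\G_\z(\emptyset,\xvec)$ with $C=\emptyset$. Moreover, both the augmentation and the CCA requirements (Definitions~\ref{def:model-augmentation} and~\ref{def:cca-prompts}) are imposed prompt by prompt. So the plan has two parts. First, lower-bound $\Pr[\cG(S^\ast,\xvec)\credits s_1]$ for every $\eps$-CCA augmentation $\cG$ and every $\xvec$. Second, exhibit a single augmentation attaining the bound at every prompt simultaneously. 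Together these pin down the crediting probability of any optimal $\cGopt_\z$.

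\emph{Setup.} Fix $\xvec$ and write $P=\G_\z(S^\ast,\xvec)$ and $Q=\G_\z(\emptyset,\xvec)$. A crediting model is described by the sub-distribution $q(y)=\Pr[\text{output }y,\ s_1\notin C]$, and the augmentation condition is exactly $0\le q\le P$ pointwise. Let $c=1-\sum_y q(y)$ be the crediting probability. If $c<1$, then $\eps$-CCA with $\delta=0$ on a discrete space is equivalent to
\[
e^{-\eps}Q(y)\le \frac{q(y)}{1-c}\le e^{\eps}Q(y)\quad\text{for all } y .
\]

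\emph{Case $\xvec\not\sqsubseteq\z$.} The rollout never passes through the biased node $\z$: either $\xvec$ already diverges from $\z$, or it extends $\z$ strictly, where $\M_\z$ is uniform or outputs $\bot$. Hence $P=Q$, and $q=P$ is feasible with $c=0$. So the minimum is $0$, and $c=0$ is forced for any optimal $\cGopt_\z$.

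\emph{Case $\xvec\sqsubseteq\z$.} Set $\beta=1-e^{-\eps}(1-\gamma)$. The two distributions agree except on completions through $\z$: for $y$ ending in $\z\concat 1$ we have $P(y)=(1+\beta)Q(y)$, and for $y$ ending in $\z\concat 0$ we have $P(y)=(1-\beta)Q(y)=e^{-\eps}(1-\gamma)Q(y)$. Such $y$ have $Q(y)=2^{-(\ell-|\xvec|)-1}>0$, because reaching $\z$ from $\xvec$ has the same positive probability under both datasets.
\begin{itemize}
\item \emph{Lower bound.} Take any such $y$ ending in $\z\concat 0$. Combining $q(y)\le P(y)$ with the CCA lower inequality gives $(1-c)e^{-\eps}Q(y)\le e^{-\eps}(1-\gamma)Q(y)$, hence $c\ge\gamma$. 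The escape clause $c=1$ also satisfies $c\ge\gamma$.
\item \emph{Achievability.} Set $q=(1-\gamma)Q$. This satisfies $q\le P$ everywhere, since $P\ge Q$ off the $\z\concat 0$ branch and $P=e^{-\eps}(1-\gamma)Q\ge(1-\gamma)Q$ on it. Then $q/(1-c)=Q$ exactly, so the CCA condition holds even with $\eps=0$.
\end{itemize}

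\emph{Assembling.} Define $\cG$ prompt by prompt. With probability $1-\gamma$, output $y\sim Q$ uncredited; otherwise output $y$ from the normalized residual $(P-(1-\gamma)Q)/\gamma$ and credit $s_1$. Off prefixes of $\z$, simply output $P$ uncredited. This $\cG$ is an $\eps$-CCA augmentation attaining the per-prompt minimum $\gamma$ or $0$. Since $f(s_1)>0$, minimizing $\E[f(C)]$ is the same as minimizing $\Pr[s_1\in C]$. So every optimal $\cGopt_\z$ must match this minimum at every prompt: being $\le$ the constructed $\cG$ forces it down to the bound, and the lower bound forces it up to it.

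\emph{Main obstacle.} The delicate step is the lower bound. It relies on two facts that need checking: the conditional-distribution formulation of CCA converts to the pointwise inequality above, and the $\z\concat 0$ leaf has positive $Q$-mass. It also matters that the $\pi$-factor (the probability of reaching $\z$) cancels, which is why the optimal crediting probability is the constant $\gamma$ independent of $|\xvec|$. I would write this inequality chain carefully and treat the construction and the case $\xvec\not\sqsubseteq\z$ as routine.
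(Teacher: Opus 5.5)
There is a genuine gap, and it is in the achievability step, which you labelled routine. Your per-prompt reduction, the case $\xvec\not\sqsubseteq\z$, and the lower bound $c\ge\gamma$ are correct and match the paper's argument (your $q(y)$ is the paper's $r_yp_y$, and your lower bound is its Step~1).

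Let $y^\ast$ be the unique output passing through $\z\concat 0$. You correctly computed $P(y^\ast)=e^{-\eps}(1-\gamma)Q(y^\ast)$, but then asserted $e^{-\eps}(1-\gamma)Q\ge(1-\gamma)Q$ on that branch. This is false for every $\eps>0$. So $q=(1-\gamma)Q$ violates $q\le P$ at $y^\ast$, and your credited residual $(P-(1-\gamma)Q)/\gamma$ takes the negative value $-(1-e^{-\eps})(1-\gamma)Q(y^\ast)/\gamma$ there. The construction cannot be rescued as is. It would be a $0$-CCA augmentation crediting with probability $\gamma$. Yet your own lower-bound chain, run with CCA parameter $0$ (model bias unchanged), forces $c\ge 1-e^{-\eps}(1-\gamma)>\gamma$. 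Without a valid construction you only know $c\ge\gamma$ for every $\eps$-CCA augmentation, and the claimed equality for optimal augmentations does not follow.

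The missing idea is that the lower bound is tight at $y^\ast$. Any augmentation with $c=\gamma$ must leave $y^\ast$ entirely uncredited, $q(y^\ast)=P(y^\ast)=e^{-\eps}(1-\gamma)Q(y^\ast)$. This leaves an uncredited deficit of $(1-e^{-\eps})(1-\gamma)Q(y^\ast)$ relative to $(1-\gamma)Q$. That deficit must be placed on other outputs, using the upper slack $e^{\eps}$ in the CCA constraint while staying below $P$. The paper does this by an intermediate-value argument: it shows $\sum_y\underline{r}_yp_y\le 1-\gamma\le\sum_y\overline{r}_yp_y$ via a three-case analysis on $e^{\eps}(1-\gamma)$.

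A shorter fix is to put the whole deficit on $y'=\z\concat 1$. Take $q(y^\ast)=e^{-\eps}(1-\gamma)Q(y^\ast)$, $q(y')=(2-e^{-\eps})(1-\gamma)Q(y')$, and $q(y)=(1-\gamma)Q(y)$ otherwise. The checks are:
\begin{itemize}
\item Since $Q(y')=Q(y^\ast)$, the total is $1-\gamma$.
\item The upper CCA inequality at $y'$ reduces to $e^{\eps}+e^{-\eps}\ge 2$.
\item The constraint $q(y')\le P(y')=(2-e^{-\eps}(1-\gamma))Q(y')$ reduces to $2\gamma\ge 0$.
\item The lower CCA inequality at $y'$ holds since $2-e^{-\eps}\ge e^{-\eps}$.
\item The credited residual $(P-q)/\gamma$ is exactly $Q$ with the mass of $y^\ast$ moved onto $y'$, a genuine distribution.
\end{itemize}
With this in place of your ``Assembling'' step, the rest of your argument goes through.
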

The proof is deferred to Appendix \ref{pf:lem-retrofit-lb-instance-opt}.
To obtain this characterization, we show that any optimal $\cGopt_\z$ can be represented as a solution to a linear program and then characterize the optimal solutions of the LP. 

\begin{remark}[Crediting documents despite vanishing impact]\label{remark:tv-0}
Lemma~\ref{lem:retrofit-lb-instance-opt} makes us question whether CCA an appropriate definition for credit attribution. 
To see why, notice that data has a vanishing impact on the models in our family. 
Concretely, \[\mathsf{TV}\bigl(\M_\z(\{s_1\},\xvec),\;\M_\z(\emptyset,\xvec)\bigr) \le 2^{-\ell}\] for every prompt $\xvec$.
Functionally, the document $s_1$ makes no difference: the models are indistinguishable.

Even so, Lemma~\ref{lem:retrofit-lb-instance-opt} tells us that any optimal $\eps$-CCA augmentation of the rollout $\G_\z$ is required to credit $s_1$ with \emph{constant} probability $\gamma>0$. 

This violates of our intuition how a credit-attributing model should behave.
Perhaps this means that CCA is not a good definition for the problem. 
Or perhaps the issue goes away if we relax the problem by considering $\delta >0$ or non-augmentations. 
    
\end{remark}

\subsection{Proving Theorem~\ref{thm:retrofit-main}}
\label{sec:retrofit-putting-the-proof-together}
In this section, we formalize the high-level proof approach discussed in \S\ref{sec:retrofit-main-theorem}, and put it all together to prove Theorem~\ref{thm:retrofit-main}.

First, we formalize the problem of finding $\z$ given access to an oracle computing some randomized function $F_\z$.
\begin{definition}[$\FindZ{F_\z}$]
 Let $F_\z$ be an (randomized) function parameterized by $\z\in \bits^\ell$. 
 An algorithm solves $\FindZ{F_\z}$ if it implements the following:
 
 Given $(\ell,\gamma,\eps)$ and oracle access to $F_\z$, output $\z$ with probability at least $2/3$.
\end{definition}

The complexity of solving $\FindZ{F_\z}$ depends on the function $F_\z$ computed by the oracle.
The following two lemmas show that exponential queries are required for $F_\z=\M_\z$, but only polynomial queries are required for $F_\z = \cG^\alpha_\z$ (any $\alpha$-approximation to an optimal CCA augmentation of the rollout of $\M_\z$).

\begin{lemma}
    \label{lem:lb-opt-find-z-worst-case}
    Any algorithm solving $\FindZ{\M_\z}$ requires $\Omega(2^{\ell})$ oracle queries in the worst case over $\z \in \bits^\ell$. 
\end{lemma}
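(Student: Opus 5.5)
The plan is a standard needle-in-a-haystack adversary argument. The key observation is that $\M_\z$ differs from the fixed reference predictor $\M_0$ at exactly one input. Here $\M_0$ outputs $\bern(1/2)$ on every $\xvec\in\bits^{\le\ell}$ and $\bot$ otherwise, for both $S=\emptyset$ and $S=\{s_1\}$. The one exceptional input is the pair $(S,\xvec)=(\{s_1\},\z)$. Since $\datauniverse=\{s_1\}$, there are only two datasets, so every query is either a sample query or a probability query on some pair $(S,\xvec)$. A query carries information about $\z$ only if it is made on $(\{s_1\},\z)$. Every other query, of either type, is answered by $\M_\z$ exactly as $\M_0$ would answer it: the same distribution for sample queries and the same number for probability queries.

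First I fix the query budget $q$. I consider any (possibly randomized) algorithm $A$ making at most $q$ queries, with $q\le 2^{\ell}/3$ say. I run $A$ against $\M_0$ and record the query sequence, which is a random variable depending only on $A$'s coins and $\M_0$'s answers. Let $Q$ be the random set of strings $\xvec\in\bits^\ell$ that appear in some query paired with $S=\{s_1\}$. Then $|Q|\le q$.

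Second, the coupling step. For any fixed $\z$, I run $A$ against $\M_\z$ and against $\M_0$ with shared randomness. The two transcripts agree, identically in distribution, up until the first query to $(\{s_1\},\z)$. In particular, on the event $\z\notin Q$, where $Q$ is defined from the $\M_0$-run, the two runs produce the same output. This uses the standard fact that an adaptive algorithm's transcript can be generated step by step, and the steps coincide until the distinguishing query. Consequently,
\[\Pr[A^{\M_\z}\text{ outputs }\z]\le \Pr[\z\in Q]+\Pr[A^{\M_0}\text{ outputs }\z].\]

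Third, I average over $\z$ uniform in $\bits^\ell$. We have $\sum_\z\Pr[\z\in Q]=\E|Q|\le q$, and $\sum_\z \Pr[A^{\M_0}\text{ outputs }\z]\le 1$. So the average success probability is at most $(q+1)/2^\ell$. A worst-case success probability of at least $2/3$ would require an average of at least $2/3$, hence $q\ge \tfrac23 2^\ell-1=\Omega(2^\ell)$. This holds for both worst-case and expected query counts, since a worst-case bound on queries bounds $|Q|$ pointwise.

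The main obstacle is making the coupling rigorous for adaptive, randomized algorithms with two query types. In particular, I must make sure that probability queries on $(\emptyset,\z)$ or on other prefixes reveal nothing. They do not: on $S=\emptyset$ every string, including $\z$, gets $\bern(1/2)$, and on inputs outside $\bits^{\le\ell}$ the answer is always $\bot$. I will handle this by a hybrid argument over the query index, or equivalently by defining the "bad event" as the first hit of $(\{s_1\},\z)$ and noting that the transcripts have identical distributions until that event.
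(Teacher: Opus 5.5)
Your proof is correct. It rests on the same observation as the paper's: $\M_\z$ agrees with a fixed $\z$-independent oracle everywhere except at the single input $(\{s_1\},\z)$. Where you differ is in how randomness is handled.

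The paper first strengthens the oracle so that each query returns the whole distribution $\M_\z(S,\xvec)$, which makes all answers deterministic. It then applies Yao's minimax principle to pass to a deterministic algorithm against uniform $\z$. Such an algorithm follows a fixed prompt sequence $T$ and outputs a fixed guess $\z'$ whenever $\z\notin T$, and counting finishes the proof.

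You keep the randomized algorithm and the sampling randomness of the oracle, and couple the run against $\M_\z$ with a run against the null oracle $\M_0$. Your inequality $\Pr[A^{\M_\z}\text{ outputs }\z]\le\Pr[\z\in Q]+\Pr[A^{\M_0}\text{ outputs }\z]$ plays the role of $T$ and $\z'$. Your averaging over uniform $\z$ is exactly the easy direction of Yao that the paper invokes. The paper's route makes the counting immediate, at the price of changing the query model (harmless, since it only helps the algorithm). Yours stays in the original two-query model, needs no minimax principle, and gives the explicit threshold $q\ge\frac23 2^\ell-1$.

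Two small points. First, $\M_0$ is not in the family, so you should note that a query bound promised only on the family transfers to $\M_0$. For fixed coins, choose $\z$ outside the first $q$ queried strings and apply your coupling.

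Second, your aside about expected query counts is not supported by the reason you give. $\E|Q|$ under $\M_0$ can exceed the expected cost under every $\M_\z$. For example, an algorithm that scans $\bits^\ell$ in random order until it hits the biased input costs about $2^{\ell-1}$ on each $\M_\z$ but $2^\ell$ on $\M_0$. That variant therefore needs an extra truncation-plus-Markov step, though the lemma itself only requires the worst-case version.
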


\begin{lemma}\label{lemma:find-z-alpha-approx}
    For all $\alpha < \gamma/2$, 
    all optimal $\eps$-CCA augmentations $\cGopt_\z$ of $\G_z$, and
    all $\alpha$-approximations $\aG^\alpha_\z$ of $\cGopt_\z$:
    $\FindZ{\aG^\alpha_\z}$ can be solved by an algorithm making $O(\ell\log(12\ell)/(\gamma-2\alpha)^2)$ oracle queries.
\end{lemma}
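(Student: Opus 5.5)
The plan is to recover $\z$ one bit at a time by a prefix search driven by Lemma~\ref{lem:retrofit-lb-instance-opt}. That lemma says that for every prompt $\xvec$, $\Pr[\cGopt_\z(S^\ast,\xvec)\credits s_1]$ equals $\gamma$ if $\xvec \sqsubseteq \z$ and $0$ otherwise. Let $\aG^\alpha_\z$ be any $\alpha$-approximation. By Definition~\ref{def:alpha-approx}, its crediting probability $q(\xvec)\defeq\Pr[\aG^\alpha_\z(S^\ast,\xvec)\credits s_1]$ satisfies
\[
q(\xvec)\ge\gamma-\alpha \text{ if } \xvec\sqsubseteq\z, \qquad q(\xvec)\le\alpha \text{ otherwise}.
\]
Because $\alpha<\gamma/2$, these two ranges are separated by a gap of $\gamma-2\alpha>0$, with the threshold $\gamma/2$ lying strictly between them. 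So deciding whether a given $\xvec$ is a prefix of $\z$ reduces to estimating a Bernoulli mean to additive accuracy $(\gamma-2\alpha)/2$.

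The algorithm maintains a known prefix $\xvec$ of $\z$, starting from the empty string $\emptystr$, which is trivially a prefix. For $j=1,\dots,\ell$ it does the following:
\begin{enumerate}
\item Query $\aG^\alpha_\z(S^\ast,\xvec\concat 0)$ independently $m$ times.
\item Compute the empirical frequency $\hat q$ with which $s_1$ appears in the returned credit set.
\item If $\hat q\ge\gamma/2$, set $\xvec\gets\xvec\concat 0$; otherwise set $\xvec\gets\xvec\concat 1$.
\end{enumerate}
Exactly one of $\xvec\concat 0$ and $\xvec\concat 1$ is a prefix of $\z$, so testing only the $0$-extension suffices. After $\ell$ rounds the algorithm outputs $\xvec$. (One could test both extensions for robustness, but that only costs a factor of $2$.)

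For the error analysis, Hoeffding's inequality gives that a single test errs with probability at most $2\exp(-2m((\gamma-2\alpha)/2)^2)=2\exp(-m(\gamma-2\alpha)^2/2)$. We need this to be at most $1/(3\ell)$, so that a union bound over the $\ell$ rounds gives overall success probability at least $2/3$. It suffices to take $m=\lceil 2\ln(6\ell)/(\gamma-2\alpha)^2\rceil$, which is $O(\log(12\ell)/(\gamma-2\alpha)^2)$. The total number of queries is then $\ell m=O(\ell\log(12\ell)/(\gamma-2\alpha)^2)$, as claimed.

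The main subtlety I expect is that $\aG^\alpha_\z$ is only an oracle that samples $(y,C)$, not one that returns probabilities. We therefore need the repeated queries on the same input to be independent samples from a fixed distribution. This is true because the approximation is defined as a fixed crediting model, so each oracle call is a fresh sample from its output distribution.

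Everything else rests on Lemma~\ref{lem:retrofit-lb-instance-opt}, which holds for every optimal augmentation, so no assumption about which optimum $\cGopt_\z$ is being approximated is needed. A final minor point is that the prefix test covers prompts of every length up to $\ell$, including $\xvec=\z$ itself at the last round. This is consistent with the lemma, which applies to all prompts.
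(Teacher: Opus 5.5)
Your proof is correct and follows the paper's argument. Both recover $\z$ by a bit-by-bit prefix search driven by the $\gamma$ versus $0$ separation of Lemma~\ref{lem:retrofit-lb-instance-opt}, which survives $\alpha$-approximation because $\alpha<\gamma/2$, using Hoeffding estimates at accuracy $(\gamma-2\alpha)/2$ with per-step failure probability $O(1/\ell)$ and a union bound; the only difference is cosmetic, in that you threshold the estimate for $\xvec\concat 0$ at $\gamma/2$ whereas the paper estimates both extensions and keeps the larger, which changes the query count by a factor of $2$ but not the bound.
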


Together, these lemmas let us prove the theorem. The key observation is that combining an algorithm  $A$ solving \Retrofit with an algorithm $B$ solving $\FindZ{\cG^\alpha_\z}$ yields an algorithm $B^A$ solving $\FindZ{\M_\z}$.

\begin{proof}[Proof of Theorem~\ref{thm:retrofit-main}]
    Let $B$ be the oracle algorithm solving $\FindZ{\aG^\alpha_\z}$ guaranteed by Lemma~\ref{lemma:find-z-alpha-approx} that makes 
    \[
    N_B=O(\ell\log(12\ell)/(\gamma-2\alpha)^2)
    \]
    queries to its oracle. Then $B^A$ is an oracle algorithm that solves $\FindZ{\M_\z}$ by making $N_B$ many queries to $A$. By Lemma~\ref{lem:lb-opt-find-z-worst-case}, $B^A$ makes $\Omega(2^\ell)$ queries to its oracle $\M_\z$ for the worst-case $\z$. Therefore, for the worst-case $\z$, $A$ must make 
    \[
    \Omega(2^\ell\cdot (\gamma-2\alpha)^2 / \ell\log(12\ell))
    \]
    queries to its oracle $\M_\z$.

    For any $\z$, by definition, $\M_\z$ only generates $\bot$ when the input prompt $\xvec$ has length greater than $\ell+1$. Therefore, by definition of rollout, $\G^{\M_\z}(S, \xvec_0)$ with $\xvec_0=\lambda$ always generate output of length $\ell+1$.
\end{proof}

%% file: sections/tikzfigures/lb-family.tex
\begin{figure}
    \centering
    \begin{subfigure}[b]{0.45\linewidth}
        \centering
        \scalebox{0.9}[0.7]{\input{sections/tikzfigures/lb-ptree}}
        \caption{$\gMM{\M_{10}}(\{s_1\},\emptystr)$}
    \end{subfigure}
        \quad\quad
    \begin{subfigure}[b]{0.45\linewidth}
        \centering
        \scalebox{0.9}[0.7]{\input{sections/tikzfigures/lb-qtree}}
        \caption{$\gMM{\M_{10}}(\emptyset,\emptystr)$}
    \end{subfigure}
    \caption{Example pair of generation trees with $\ell=2$ and $\z=10$. All  edges have probability $1/2$ except the dashed blue edges which have probability $1/2 \pm (1-e^{-\eps}(1-\gamma))/2$.}
    \label{fig:lb-family-eg}
\end{figure}

%% file: sections/tikzfigures/lb-ptree.tex
\usetikzlibrary{shapes.geometric}
\begin{tikzpicture}[
    node distance=0.5cm and 1cm,
    dot/.style={circle, fill=black, inner sep=0.5pt, minimum size=3pt},
    blue-dot/.style={dot, fill=blue, inner sep=0.5pt, minimum size=5pt},
    level 1/.style={sibling distance=2cm},
    level 2/.style={sibling distance=1cm},
    level 3/.style={sibling distance=0.5cm},
    edge label/.style={midway, above},
    use as bounding box
]

\node[dot, label=left:{$\emptystr$}] (root) {}
    child {
        node[dot, label=left:{$0$}] {}
        child {
            node[dot, label=left:{$00$}] {}
            child {
                node[dot, label={[rotate=90]left:{$000$}}] {}
            }
            child {
                node[dot, label={[rotate=90]left:{$001$}}] {}
            }
        }
        child {
            node[dot, label=left:{$01$}] {}
            child {
                node[dot, label={[rotate=90]left:{$010$}}] {}
            }
            child {
                node[dot, label={[rotate=90]left:{$011$}}] {}
            }
        }
    }
    child {
        node[dot, label=left:{$1$}] {}
        child {
            node[blue-dot, label=left:{\cblue{$10$}}] {}
            child {
                node[dot, label={[rotate=90]left:{$100$}}] {}
                edge from parent[blue, thick, densely dashed]
            }
            child {
                node[dot, label={[rotate=90]left:{$101$}}] {}
                edge from parent[blue, thick, densely dashed]
            }
        }
        child {
           node[dot, label=left:{$11$}] {}
            child {
                node[dot, label={[rotate=90]left:{$110$}}] {}
            }
            child {
                node[dot, label={[rotate=90]left:{$111$}}] {}
            }
        }
    };

\end{tikzpicture}

%% file: sections/tikzfigures/lb-qtree.tex
\usetikzlibrary{shapes.geometric}
\begin{tikzpicture}[
    node distance=0.5cm and 1cm,
    dot/.style={circle, fill=black, inner sep=0.5pt, minimum size=3pt},
    blue-dot/.style={dot, fill=blue, inner sep=0.5pt, minimum size=5pt},
    level 1/.style={sibling distance=2cm},
    level 2/.style={sibling distance=1cm},
    level 3/.style={sibling distance=0.5cm},
    edge label/.style={midway, above},
    use as bounding box
]

\node[dot, label=left:{$\emptystr$}] (root) {}
  child {
    node[dot, label=left:{$0$}] {}
    child {
      node[dot, label=left:{$00$}] {}
      child { node[dot, label={[rotate=90]left:{$000$}}] {} }
      child { node[dot, label={[rotate=90]left:{$001$}}] {} }
    }
    child {
      node[dot, label=left:{$01$}] {}
      child { node[dot, label={[rotate=90]left:{$010$}}] {} }
      child { node[dot, label={[rotate=90]left:{$011$}}] {} }
    }
  }
  child {
    node[dot, label=left:{$1$}] {}
    child {
      node[blue-dot, label=left:{\cblue{$10$}}] {}
      child { node[dot, label={[rotate=90]left:{$100$}}] {} }
      child { node[dot, label={[rotate=90]left:{$101$}}] {} }
    }
    child {
      node[dot, label=left:{$11$}] {}
      child { node[dot, label={[rotate=90]left:{$110$}}] {} }
      child { node[dot, label={[rotate=90]left:{$111$}}] {} }
    }
  };
\end{tikzpicture}

%% file: sections/discussion.tex
\section{Discussion and Open Questions}
\newcommand{\openQ}{\emph{Question:}\xspace}

\paragraph{Relaxing of CCA retrofitting}
\S\ref{sec:retrofit} shows that optimal $\eps$-CCA augmentation has undesirable properties: In addition to the computational infeasibility, as Remark~\ref{remark:tv-0} shows, it may also require crediting documents whose impact on the model is vanishingly small. Such requirement violates our intuitions about how credit attribution should behave.

\openQ Does relaxing \Retrofit---for example, allowing models that don't exactly preserve the original generation output distribution---get around these undesirable properties? 

Our infeasibility result only holds for $\delta=0$, essentially requiring the CCA counterpart of pure DP. As with DP, allowing $\delta > 0$ (or even other relaxations \citep{dwork2016concentrated,bun2016concentrated,mironov2017renyi,dong2022gaussian}) might enable qualitatively different results.

\openQ Does taking $\delta >0$ (or the CCA counterpart of other DP relaxations) get around the undesirable properties?

\paragraph{Non-black box retrofitting}
Theorem~\ref{thm:retrofit-main} tells us that solving \Retrofit with \emph{black-box} access to the base model $\M$ is hard in the worst case.
However, it may still be possible to solve a \emph{non-black box} variant of the problem.
Indeed, we have already shown that the non-black box variant is efficient for the family of models $\mathcal{M}_\ell$ that we use to prove the hardness result: Lemma~\ref{lem:retrofit-lb-instance-opt} fully describes the optimal $\eps$-CCA augmentation. 

\openQ Are there efficient algorithms for non-black box retrofitting?

\paragraph{Credit-optimal CCA augmentations}
Lemma~\ref{lem:retrofit-lb-instance-opt} characterizes the optimal $\eps$-CCA augmentation for a concrete family of models. We know little about credit-optimal augmentations for general models. Moreover, we required credit-optimality only for a particular cost function $f$ and dataset $S$. A stronger requirement would be optimality for all datasets $S$ or all cost functions $f$ within a given class. It is not clear that such optima even exist, or whether one would have to settle for Pareto optimality.

\openQ Which optimality criteria are appropriate and/or easy to optimize over CCA-augmentations in general?

\paragraph{CCA composition}
Theorem~\ref{ex:cca-ar-non-composition} shows that CCA does not always compose autoregressively: for $\gcM$ to be CCA, it does not suffice that $\M$ be CCA. 
That doesn't mean focusing on the next-token predictor $\M$ is a dead end.

\openQ Are there simple conditions on credit-attributing next-token predictor $\cM$ that suffices to guarantee CCA of its rollout $\gcM$?

%% file: sections/appendix.tex
\appendix
\section{Deferred proofs from Section~\ref{sec:cca-ntp}}
\subsection{Proof of Theorem \ref{thm:lb-eps-ar-cca}}
We restate the theorem for ease of reference.
\CompositionLowerbound*
\begin{proof}
Fix dataset $S$, document $s_i \in S$, and initial prompt $\xvec_0$. 
We introduce the following notations:
\begin{itemize}
    \item $\wvec = ((x_1,C_1'),(x_2,C_2'),\dots,(x_{|\wvec|},C_{|\wvec|}'))$ denotes a vector of token-credit set pairs. Its $j$-th component $w_j = (x_j, C_j')$ corresponds to the output at Line 5 of Algorithm \ref{algo:ar-m-with-credit} at the $j$-th round: $(x_j,C'_j) \sim \cM(S,x_1 \|\dots \|x_{j-1})$.
    In other words, $\wvec$ represents a trace of a autoregressive rollout in which $C_j'$ is credited by the next-token predictor $\cM$ when the token $x_j$ is generated. We use $s_i \in \wvec$ to denote the event $s_i \in C_1' \cup \dots \cup C'_{|\wvec|}$.
    
    \item $P(\cdot)$ is the distribution over traces $\wvec$ sampled as
    $\wvec \sim \gcM(S,\xvec_0)$.
    
    \item $Q(\cdot)$ is the distribution over traces $\wvec$ sampled as
    $\wvec \sim \gcM(S_{-i},\xvec_0)$.

    \item $P_{\cM}(\cdot \mid x_1,\dots,x_{j-1})$ is the distribution over $w_j$ sampled as $w_j \sim \cM(S, \xvec_0 \concat x_1 \concat \dots \concat x_{j-1})$.

    \item $Q_{\cM}(\cdot \mid x_1,\dots,x_{j-1})$ is the distribution over $w_j$ sampled as $w_j \sim \cM(S_{-i}, \xvec_0 \concat x_1 \concat \dots \concat x_{j-1})$.

\end{itemize}
Let $\wvec$ be a trace where $s_i$ is never credited: $s_i \not\in \wvec$. By the chain rule,
\begin{equation*}
    P(\wvec) = \prod_{j=1}^{|\wvec|}P(w_j\mid w_{1},\dots,w_{j-1}).
\end{equation*}
By definition of the crediting rollout $\gcM$ of $\cM$ (Algorithm~\ref{algo:ar-m-with-credit}), only the tokens $x_j$ sampled at each round are involved in the later calls to $\cM$. That is, $w_j$ is conditionally independent of $(C'_1, \dots, C'_{j-1})$ conditioned on $(x_1,\dots,x_{j-1})$. Therefore,
\begin{equation}
    P(w_j\mid w_{1},\dots,w_{j-1}) = P_{\cM}(w_j \mid x_1,\dots, x_{j-1}). \label{eq:C-to-t}
\end{equation}
Since $s_{-i} \not \in \wvec$, we have $s_{-i} \not\in C'_j$. That is, event $E_j$ occurs. Therefore,
\begin{equation*}
     P_{\cM}(w_j \mid x_1,\dots, x_{j-1}) 
     = P_{\cM}(w_j \cap E_j \mid x_1,\dots, x_{j-1}) 
     = P_{\cM}(w_j \mid E_j, x_1,\dots, x_{j-1})\cdot P_{\cM}(E_j \mid x_1,\dots, x_{j-1}).
\end{equation*}
By definition, the distribution $P_{\cM}(\cdot | E_j, x_1, \dots, x_{j-1})$ is the conditional distribution in the CCA definition: $\cM(S^{-i}, \xvec\concat x_1\concat \dots \concat x_{j-1})$.
The corresponding counterfactual distribution is $Q_{\cM}(\cdot | x_1, \dots, x_{j-1})$.
By assumption that $\cM$ satisfies $\eps$-CCA:
\begin{equation*}
    P_{\cM}(w_j \mid E_j, x_1,\dots, x_{j-1}) \geq e^{-\eps} \cdot Q_{\cM}(w_j \mid x_1,\dots, x_{j-1}).
\end{equation*}
Combining the previous equations, we get
\begin{equation*}
        P(\wvec) 
    \geq 
        e^{-\eps |\wvec|} 
            \cdot \prod_{j=1}^{|\wvec|} Q_{\cM}(w_j \mid x_1,\dots, x_{j-1}) 
            \cdot \prod_{j=1}^{|\wvec|} P_{\cM}(E_j \mid x_1,\dots, x_{j-1}).
\end{equation*}
By the same logic as \eqref{eq:C-to-t},
\begin{equation*}
    \prod_{j=1}^{|\wvec|} Q_{\cM}(w_j \mid x_1,\dots, x_{j-1}) 
    = 
    \prod_{j=1}^{|\wvec|} Q(w_j \mid w_1,\dots, w_{j-1}) = Q(\wvec).
\end{equation*}
Therefore, for any generation result $(\xvec^{-i}, C^{-i})$ such that $s_i \not \in C^{-i}$, we have 
\begin{align*}
    P\left((\xvec^{-i}, C^{-i}) \right) 
    = 
        \sum_{\substack{\wvec=((x_j,C'_j))_j \text{ st.} \\
            x_1\concat \dots \concat x_{|\wvec|} = \xvec^{-i} \\
            \cup_{j} C'_j = C^{-i}}} 
            P(\wvec) & \geq e^{-\eps |\xvec^{-i}|} 
    \cdot 
        \sum_{\substack{\wvec=((x_j,C'_j))_j \text{ st.} \\
            x_1\concat \dots \concat x_{|\wvec|} = \xvec^{-i} \\
            \cup_{j} C'_j = C^{-i}}}
            \left[ Q(\wvec) \cdot \prod_{j=1}^{|\wvec|}P_{\cM}(E_j \mid x_1,\dots, x_{j-1}) \right] \\ 
    &= 
    e^{-\eps |\xvec^{-i}|} 
        \cdot  Q\left((\xvec^{-i}, C^{-i})\right) 
        \cdot \prod_{j=1}^{|\wvec|} P_{\cM}(E_j \mid x^{-i}_1,\dots, x^{-i}_{j-1}).
\end{align*}
Since $P\left((\xvec^{-i}, C^{-i}) \right) = P\left((\xvec^{-i}, C^{-i}) \cap [s_i \not \in C] \right)$ by the assumption $s_i \not \in C^{-i}$,
we conclude that
\begin{equation*}
    P\left((\xvec^{-i}, C^{-i}) \mid [s_i \not \in C] \right) 
    = 
    \frac{P\left((\xvec^{-i}, C^{-i}) \right)}{P(s_i \not \in C)} 
    \geq 
    e^{-\eps |\xvec^{-i}|}
        \cdot Q\left((\xvec^{-i}, C^{-i})\right) 
        \cdot \frac{\prod_{j=1}^{|\wvec|}P_{\cM}(E_j \mid x^{-i}_1,\dots, x^{-i}_{j-1})}{P(s_i \not \in C)}.
\end{equation*}
By assumption $\gcM$ is $\eps'$-CCA. By definition of $P$, $Q$, we get
\[e^{\eps'} \ge e^{-\eps |\xvec^{-i}|} \cdot  \frac{\prod_{j=1}^{|\wvec|}P_{\cM}(E_j \mid x^{-i}_1,\dots, x^{-i}_{j-1})}{P(s_i \not \in C)}.\]
Applying a logarithm and taking the maximum over generations $(\xvec^{-i}, C^{-i})$ such that $s_i \not \in C^{-i}$ yields the inequality in the theorem statement.
\end{proof}

\section{Deferred proofs from Section \ref{sec:retrofit}}
\label{appendix:retrofit}
\subsection{Proof of Lemma \ref{lem:retrofit-lb-instance-opt}}\label{pf:lem-retrofit-lb-instance-opt}

    We start from the definition $\eps$-CCA augmentation and reduce the constraints to an essential subsets utilizing the instance-specific properties. We then characterize the solution satisfying these essential constraints.

    By Definition \ref{def:credit-opt}, any optimal $\eps$-CCA augmentation $\cGopt_\z$ of $\G_\z$ solves the following optimization problem for all $S'$, all $s_1 \in S'$, and all $\xvec$,
    \begin{equation}\label{prog:cca-opt-def}
        \begin{aligned} 
        &\min \Pr_{(y,C) \sim \cG_\z(S',\xvec)}\big[s_1 \in C\big] \\
        &s.t. \text{ (a) } \vee \text{ (b)}
       \end{aligned}.
    \end{equation}
    where 
    \begin{enumerate}[(a)]
        \item $P(s_1 \in C)= 1$,
        \item $P(\cdot \mid s_1\not\in C) \approx_{\eps} Q$,
    \end{enumerate}
    $P$ is the output distribution of $\cG_\z(S',\xvec)$, and $Q$ is the output distribution of $\cG_\z(S' \setminus \{s_1\},\xvec)$.

    \begin{claim} Given $\datauniverse = \{s_1\}$. If $\cGopt_\z$ solves the following optimization problem for $S=\{s_1\}$ and all $\xvec$:
    \begin{equation}\label{prog:cca-lb-opt}
        \begin{aligned} 
        &\min \Pr_{(y,C) \sim \cG_\z(S,\xvec)}\big[s_1 \in C\big] \\
        &s.t. \text{ (not (a)) } \wedge \text{ (b)}.
       \end{aligned}
    \end{equation}
    and for $S'=\emptyset$ gives no credit, i.e., for all $\xvec$,
    \[
    \Pr_{(y,C) \sim \cGopt_\z(\emptyset,\xvec)}\big[s_1 \in C\big] = 0,
    \]
    then $\cGopt_\z$ is an optimal $\eps$-CCA augmentation of $\G_\z$.
    \end{claim}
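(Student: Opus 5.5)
The claim is a reduction. I will show that a crediting model with the two stated properties is an $\eps$-CCA augmentation of $\G_\z$, and that its crediting probability on $S^\ast=\{s_1\}$ is no larger than that of any other $\eps$-CCA augmentation. I will take the claim as implicitly assuming, as in its context, that $\cGopt_\z$ augments $\G_\z$: its $\outputspace$-marginal equals $\G_\z(S,\xvec)$ for every $S$ and $\xvec$. With $\datauniverse=\{s_1\}$ there are only two datasets, $\emptyset$ and $\{s_1\}$. Definitions \ref{def:cca-plain} and \ref{def:cca-prompts} impose CCA constraints only for $s_i\in S$, so the only CCA constraints are those for $S=\{s_1\}$ and each prompt $\xvec$. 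The dataset $S'=\emptyset$ contributes no constraint. There the credit set must be a subset of $\emptyset$, so the hypothesis that $\cGopt_\z(\emptyset,\xvec)$ gives no credit is automatic. Its role is to fix the counterfactual distribution $Q=\cGopt_\z(\emptyset,\xvec)$ as $\G_\z(\emptyset,\xvec)\times\{\emptyset\}$.

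\textbf{Feasibility.} For each $\xvec$, the model $\cGopt_\z$ satisfies (not (a)) $\wedge$ (b) at $S=\{s_1\}$, and (b) is exactly the requirement $\cGopt_\z(S^{-1},\xvec)\approx_{\eps,0}\cGopt_\z(S_{-1},\xvec)$. So $\cGopt_\z$ meets the CCA condition for every prompt. Together with augmentation, this makes it an $\eps$-CCA augmentation.

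\textbf{Optimality.} Let $\cG$ be any $\eps$-CCA augmentation of $\G_\z$ and fix a prompt $\xvec$. Then $\cG(\emptyset,\xvec)$ also equals $\G_\z(\emptyset,\xvec)\times\{\emptyset\}$, since its credit set is empty and its marginal is forced by augmentation. So both models face the same counterfactual $Q$ and the same marginal constraint at $S=\{s_1\}$. By CCA, $\cG(\{s_1\},\xvec)$ satisfies (a) or (b).
\begin{itemize}
\item If (b) holds and (a) fails, then $\cG(\{s_1\},\xvec)$ is feasible for program \eqref{prog:cca-lb-opt}. Its crediting probability is therefore at least the optimum achieved by $\cGopt_\z$.
\item If (a) holds, $\cG$ credits $s_1$ with probability $1$, which is at least as large as any probability $\cGopt_\z$ assigns.
\end{itemize}
Since $f$ is nonzero on the single-element universe, $\E[f(C)]=f(\{s_1\})\Pr[s_1\in C]$. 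Minimizing crediting probability is therefore the same as minimizing expected cost, and Definition~\ref{def:credit-opt} holds for every prompt.

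\textbf{Main obstacle.} The difficulty is bookkeeping rather than a real technical step. First, I must interpret the claim as ranging over augmentations, so that program \eqref{prog:cca-lb-opt} is taken over distributions whose $\outputspace$-marginal is $\G_\z(\{s_1\},\xvec)$ and both models face the same $Q$. Second, I must handle the edge case in which (not (a)) $\wedge$ (b) is infeasible or its infimum is not attained. The claim assumes $\cGopt_\z$ solves the program, so it is feasible and the optimum is attained. The only remaining comparison is with alternatives satisfying (a), whose crediting probability is $1$ and which are dominated trivially.
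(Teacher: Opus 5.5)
Your proof is correct and follows essentially the same route as the paper. The empty dataset imposes no CCA constraint, and at $S=\{s_1\}$ any competing $\eps$-CCA augmentation either satisfies (b) without (a), so it is feasible for the restricted program and cannot beat $\cGopt_\z$, or satisfies (a) and is dominated because it credits with probability $1$. You are more explicit than the paper, which leaves implicit three points you spell out: that $\cGopt_\z$ is itself an $\eps$-CCA augmentation, that all augmentations share the counterfactual $Q=\G_\z(\emptyset,\xvec)\times\{\emptyset\}$, and the (b)-without-(a) comparison.
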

    \begin{proof}
    In fact, since $\datauniverse = \{s_1\}$, the only possible datasets are $\emptyset$ and $S=\{s^*\} = \datauniverse$. 
    
    For $S'=\emptyset$, the CCA condition is always satisfied vacuously
    and $\cGopt_z$ is already crediting $s_1$ with minimum probability $0$.

    For $S'=S=\{s_1\}$, consider any optimal $\eps$-CCA augmentation $\cGoptalt_\z$ of $\G_\z$. For all $\xvec$, $\cGoptalt_\z(S,\xvec)$ solves
    \begin{equation}\label{prog:cca-lb-opt-full}
        \begin{aligned}
            &\min \Pr_{(y,C) \sim \cG_\z(S,\xvec)}\big[s_1 \in C\big] \\
            &s.t. \text{ (a) } \vee \text{ (b)}.
        \end{aligned}
    \end{equation}
    If (a) holds for some $\xvec$, then $\Pr_{(y,C) \sim \cGoptalt_\z(S,\xvec)}\big[s_1 \in C\big]=P(s_1 \in C) = 1$. Hence $\cGoptalt_\z$ is dominated by any $\cGopt_\z$ that solves (\ref{prog:cca-lb-opt}) for all $\xvec$, since $1$ is the maximum possible value of the objective being a probabilitity.
    
    Therefore, any $\cGopt_\z$ that solve (\ref{prog:cca-lb-opt}) for all $\xvec$ and gives no credit for $S'=\emptyset$ is an optimal $\eps$-CCA augmentation of $\G_\z$.
    \end{proof}
    \smallskip

    Next we show that for $S=\{s_1\}$, the crediting probability of $s_1$ subject to CCA for each individual $\xvec$ can be optimized separately using a linear program. Fix any $\xvec$,
    the following linear program maximizes the total probability $R$ of not giving credit, 
    by choosing for each $y$ the probability $r_y$ of not giving credit conditional on the generation output
    being $y$, subject to the constraints of $\eps$-CCA.

    \begin{claim} For any fixed $\xvec$, the optimization problem (\ref{prog:cca-lb-opt})
    is equivalent to the following linear program with decision variables $R,\{r_y\}_y$,
    \begin{equation}\label{prog:cca-lb-opt-one-t-lp}
        \begin{aligned}
            &\max R \\
            &s.t.\,\, e^{-\eps}q_y \cdot R \leq r_y p_y \leq e^{\eps}q_y\cdot R\,\,\,\, \forall y\in\tokenspace^\ast\\
            &\quad\,\,\, R = \sum_{y \in \tokenspace^\ast}r_y p_y\\
            &\quad\,\,\,r_y \in [0,1]
        \end{aligned}
    \end{equation}
    where
    \[
    p_y \defeq \Pr_{y'\sim \G_\z(\{s_1\},\xvec)}[y'=y],\;q_y \defeq \Pr_{y'\sim \G_\z(\emptyset,\xvec)}[y'=y]
    \]
    are the probability of outputting $y$ by the original and counterfactual distributions of model $\G_\z$ being augmented on the prompt $\xvec$.
    \end{claim}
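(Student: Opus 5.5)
The plan is to parametrize every feasible crediting model for the fixed prompt $\xvec$ by its conditional no-credit probabilities and show that the constraints of (\ref{prog:cca-lb-opt}) become exactly those of (\ref{prog:cca-lb-opt-one-t-lp}). Since $\datauniverse=\{s_1\}$, the only question a crediting model $\cG_\z(S,\xvec)$ with $S=\{s_1\}$ answers is whether $C=\{s_1\}$ or $C=\emptyset$. The augmentation requirement (Definition~\ref{def:model-augmentation}) pins the marginal over outputs to $p_y$. Hence the joint law of $(y,C)$ is determined by the numbers $r_y\defeq\Pr[s_1\notin C\mid y]\in[0,1]$ for $y$ with $p_y>0$; for $p_y=0$ we may set $r_y$ arbitrarily, since $r_yp_y=0$. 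Conversely, every choice of $r_y\in[0,1]$ gives a valid augmentation on this input. Then $\Pr[s_1\notin C]=\sum_y r_yp_y=R$, and minimizing the credit probability $1-R$ is the same as maximizing $R$.

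Next I will translate the CCA constraint (b). The counterfactual distribution $Q$ is the output of the optimal model on $S'=\emptyset$. By the preceding claim it gives no credit, and by augmentation its marginal is $q_y$, so $Q$ puts mass $q_y$ on $(y,\emptyset)$. The conditional distribution $P(\cdot\mid s_1\notin C)$ puts mass $r_yp_y/R$ on $(y,\emptyset)$ whenever $R>0$. Condition (not (a)) is exactly $R>0$, and (a) corresponds to $R=0$. For pure $\eps$-closeness on a discrete space, the event-wise inequalities are equivalent to the pointwise ones, because summing pointwise bounds over the outcomes of any event gives the event bound. So (b) becomes $e^{-\eps}q_y\le r_yp_y/R\le e^{\eps}q_y$ for all $y$. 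Multiplying by $R>0$ yields the linear constraints of (\ref{prog:cca-lb-opt-one-t-lp}). Outcomes with $C=\{s_1\}$ have zero mass under both distributions, so they impose no constraint.

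Finally I need to handle the boundary case $R=0$, which is where I expect some care is required. When $R=0$, the LP constraints force $r_yp_y=0$, so the point $R=0$ is feasible for the LP but corresponds to case (a), which (\ref{prog:cca-lb-opt}) excludes. I will argue that this does not affect equivalence of optima. The LP has a feasible point with $R>0$: take $r_y=c\,q_y/p_y$ where $p_y>0$. In this family $\G_\z(\{s_1\},\xvec)$ and $\G_\z(\emptyset,\xvec)$ have the same support and the ratio $q_y/p_y$ is bounded, so a small $c>0$ works. Therefore the LP maximum is positive, and it is attained because the feasible set is compact. So every LP optimum has $R>0$ and corresponds, via the bijection above, to a solution of (\ref{prog:cca-lb-opt}), and conversely. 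The main obstacle is bookkeeping rather than depth: ensuring that the event-wise $\approx_\eps$ reduces to pointwise constraints, and that the $R=0$ and $p_y=0$ corner cases are excluded cleanly.
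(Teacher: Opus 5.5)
Your proposal is correct and follows essentially the same route as the paper: you parametrize by $r_y=\Pr[s_1\notin C\mid y]$, use augmentation to fix the marginals $p_y,q_y$, note that constraints on outcomes with $C=\{s_1\}$ are vacuous, and clear the denominator $R$. You are also more careful than the paper in reducing event-wise $\approx_\eps$ to pointwise constraints and in handling the $R=0$ boundary, both of which the paper glosses over. One small correction: $Q$ puts all its mass on $C=\emptyset$ simply because any crediting model must output $C\subseteq S'=\emptyset$ (Definition~\ref{def:ca-algo}), so this holds for every candidate model, not only via the preceding claim.
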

    \begin{proof}
    For $S^\ast=\{s_1\}$ and fix $\xvec$, let $P,Q$ be the original and counterfactual distributions of $\cG_\z(S^\ast,\xvec)$.
    The component for $\xvec$ of (\ref{prog:cca-lb-opt}) is equivalent to
    \begin{equation}\label{prog:cca-lb-opt-one-t}
        \begin{aligned}
            &\min P(s_1 \in C) \\
            &s.t.\,\, e^{-\eps}Q(y,C) \leq \frac{P((y,C)\wedge s_1 \not\in C)}{P(s_1 \not\in C)} \leq e^{\eps}Q(y,C)\,\,\,\, \forall y\in\tokenspace^\ast,C \in \left\{\{s_1\},\emptyset\right\}.
        \end{aligned}
    \end{equation}
    By definition, $Q$ is the output distribution of $\cG_\z(S\setminus\{s_1\},\xvec) = \cG_\z(\emptyset,\xvec)$ and hence $Q(y, \{s_1\}) = 0$. At the same time, $P((y,\{s_1\})\wedge s_1 \not\in \{s_1\}) = 0$. Therefore, the constraints in (\ref{prog:cca-lb-opt-one-t}) on $C=\{s_1\}$ are automatically satisfied.

    $Q(y, \{s_1\}) = 0$ further implies that 
    \[
    Q(y,\emptyset) = Q(y,\emptyset) + Q(y, \{s_1\}) = \sum_{C\subseteq S}Q(y,C)  =  Q_Y(y).
    \]
    Also note that $P(y,\emptyset) = P(\emptyset\mid y)P_Y(y)$, and 
    \[
    1-P(s_1 \in C)  =P(s_1 \not\in C) = \sum_{y \in \tokenspace^\ast}P(y,\emptyset) = \sum_{y \in \tokenspace^\ast}P(\emptyset \mid y)P_Y(y)
    \]
    By the fact that $\cGopt_\z$ is an augmentation of $\G_\z$ and Definition \ref{def:model-augmentation} of the augmentation, $P_Y(y) = p_y,\;Q_Y(y) = q_y$.
    Therefore, (\ref{prog:cca-lb-opt-one-t}) is equivalent to the 
    \begin{equation}\label{prog:cca-lb-opt-one-t-simp}
        \begin{aligned}
            &\min 1-\sum_{y \in \tokenspace^\ast}P(\emptyset \mid y)p_y \\
            &s.t.\,\, e^{-\eps}q_y \leq \frac{P(\emptyset\mid y)p_y}{\sum_{y \in \tokenspace^\ast}P(\emptyset \mid y)P_y} \leq e^{\eps}q_y\,\, \forall y\in\tokenspace^\ast.
        \end{aligned}
    \end{equation}
    
    Let $r_y = P(\emptyset \mid y)$ be the probability of not crediting $s_1$ given the output being $y$, changing the $\min$ to $\max$ by removing constants and flipping the sign on the objective, (\ref{prog:cca-lb-opt-one-t-simp}) becomes
    \begin{equation}\label{prog:cca-lb-opt-one-t-simp2}
        \begin{aligned}
            &\max \sum_{y \in \tokenspace^\ast}r_y p_y \\
            &s.t.\,\, e^{-\eps}q_y \leq \frac{r_y p_y}{\sum_{y \in \tokenspace^\ast}r_y p_y} \leq e^{\eps}q_y\,\, \forall y\in\tokenspace^\ast\\
            &\quad\,\,\,\, r_y \in [0,1]
        \end{aligned}
    \end{equation}
    where $\{r_y\}_y$ are the decision variables. Let $R=\sum_{y \in \tokenspace^\ast}r_y p_y$ and rearranging (\ref{prog:cca-lb-opt-one-t-simp2}), we get the linear program (\ref{prog:cca-lb-opt-one-t-lp})
    \end{proof}
    It suffices to show that for any $\xvec$, the optimal solution $R^\ast,\{r^\ast_y\}_y$ to the linear program (\ref{prog:cca-lb-opt-one-t-lp}) satisfies 
    \[
    1-\Pr[\cGopt_\z(\{s_1\},\xvec) \text{ credits } s_1]= R^\ast =\sum_{y}r^\ast_yp_y= \begin{cases}
        1-\gamma & \xvec \prefix \z \\
        1 & otherwise
    \end{cases}.
    \]

    \begin{claim}
        For any $\xvec$, an optimal solution $R^\ast,\{r^\ast_y\}_y$ to the linear program (\ref{prog:cca-lb-opt-one-t-lp}) satisfies 
         \[
         R^\ast= \sum_{y \in \tokenspace^\ast}r^\ast_y p_y = \begin{cases}
            1-\gamma & \xvec \prefix \z \\
            1 & otherwise
       \end{cases}.
       \]
    \end{claim}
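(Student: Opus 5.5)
We will bound the LP (\ref{prog:cca-lb-opt-one-t-lp}) from above using a single well-chosen output $y$, and then exhibit a feasible point attaining that bound. Throughout, fix $\xvec$ and write $\beta \defeq (1-e^{-\eps}(1-\gamma))/2 \in (0,1/2)$, so that on prompt $\z$ the model $\M_\z(\{s_1\},\z)$ is $\bern(1/2+\beta)$ while $\M_\z(\emptyset,\z)$ is $\bern(1/2)$.

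\emph{Step 1: generic upper bounds.} Summing gives $R=\sum_y r_y p_y \le \sum_y p_y = 1$. The left constraint together with $r_y\le 1$ gives $e^{-\eps}q_y R \le p_y$. Hence
\[R \le e^{\eps}\, p_y/q_y \quad\text{for every } y \text{ with } q_y>0.\]

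\emph{Step 2: case $\xvec \not\prefix \z$.} Every prompt queried during the rollout extends $\xvec$, so none of them equals $\z$. (This covers prompts longer than $\z$, prompts that diverge from $\z$, and prompts containing $\bot$.) Since $\M_\z(\{s_1\},\cdot)$ and $\M_\z(\emptyset,\cdot)$ agree on every prompt other than $\z$, we get $p_y=q_y$ for all $y$. Then $r_y\equiv 1$ with $R=1$ is feasible, because $e^{-\eps}q_y\le q_y\le e^{\eps}q_y$. Combined with $R\le 1$, this gives $R^\ast=1$.

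\emph{Step 3: case $\xvec\prefix\z$, upper bound.} Let $\pi \defeq 2^{-(|\z|-|\xvec|)}$ be the probability, identical under $S=\{s_1\}$ and $S=\emptyset$, that the rollout from $\xvec$ reaches the prompt $\z$. Consider the two outputs $y_0=\z\concat 0\concat\bot$ and $y_1=\z\concat1\concat\bot$. Then
\[q_{y_0}=q_{y_1}=\pi/2,\qquad p_{y_0}=\pi(1/2-\beta)=e^{-\eps}(1-\gamma)\,q_{y_0},\qquad p_{y_1}=\pi(1/2+\beta)=(2-e^{-\eps}(1-\gamma))\,q_{y_1},\]
and $p_y=q_y$ for every other $y$. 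Applying Step 1 to $y_0$ gives $R\le e^{\eps}\cdot e^{-\eps}(1-\gamma)=1-\gamma$.

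\emph{Step 4: case $\xvec\prefix\z$, feasibility at $R=1-\gamma$.} Fix $R=1-\gamma$. The constraints decouple: each product $u_y\defeq r_yp_y$ must lie in an interval
\[
I_y=\Bigl[e^{-\eps}(1-\gamma)q_y,\ \min\bigl(p_y,\ e^{\eps}(1-\gamma)q_y\bigr)\Bigr],
\]
and we additionally need $\sum_y u_y = 1-\gamma$. Each $I_y$ is nonempty, since $p_y\ge e^{-\eps}(1-\gamma)q_y$ in all three cases ($y_0$, $y_1$, and $p_y=q_y$). Because the $I_y$ are intervals, the set of achievable sums is itself an interval, so it suffices to check that the sum of the left endpoints is at most $1-\gamma$ and the sum of the right endpoints is at least $1-\gamma$.

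The left endpoints sum to $e^{-\eps}(1-\gamma)\le 1-\gamma$. For the right endpoints:
\begin{itemize}
\item $y_0$ contributes $e^{-\eps}(1-\gamma)\pi/2$.
\item $y_1$ contributes $\min\bigl(2-e^{-\eps}(1-\gamma),\,e^{\eps}(1-\gamma)\bigr)\pi/2$.
\item Each remaining $y$ contributes at least $(1-\gamma)q_y$; these have total $q$-mass $1-\pi$.
\end{itemize}
It therefore suffices that $e^{-\eps}(1-\gamma)+\min(\cdot)\ge 2(1-\gamma)$. If the minimum is $2-e^{-\eps}(1-\gamma)$, the left side equals $2\ge 2(1-\gamma)$. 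If the minimum is $e^{\eps}(1-\gamma)$, the left side equals $(1-\gamma)(e^{\eps}+e^{-\eps})\ge 2(1-\gamma)$. So a feasible $\{u_y\}$ exists, and setting $r_y=u_y/p_y\in[0,1]$ gives a feasible LP point with $R=1-\gamma$. Hence $R^\ast=1-\gamma$.

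The main obstacle is Step 4. A naive choice such as $r_yp_y\propto q_y$ violates $r_{y_0}\le 1$ whenever $\eps>0$, so feasibility has to be argued through the interval/intermediate-value reasoning above. The critical point there is that the slack on $y_1$, which is exactly where the bias $\beta$ was placed, compensates for the deficit forced at $y_0$.
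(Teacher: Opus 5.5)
Your proposal is correct and follows the paper's proof essentially step for step. You derive the upper bound $R\le e^{\eps}p_{y}/q_{y}$ at $y=\z\concat 0$ using $r_y\le 1$, and then use the same sandwich/intermediate-value argument, showing that the sums of the lower and upper endpoints bracket $1-\gamma$.

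Your version is slightly tidier in two respects. Tracking $\pi$ covers every $\xvec\prefix\z$, whereas the paper's explicit masses $(1/2)^{\ell+1}$ and $1-2^{-\ell}$ are written only for $\xvec=\emptystr$. Bounding each unbiased output's contribution below by $(1-\gamma)q_y$ collapses the paper's three cases into two.
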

    \begin{proof}

    We first show the ``otherwise'' case when $\xvec \not\prefix \z$. Then the proof for the case $\xvec \prefix \z$ is divided into two steps: In Step 1 we show that $1-\gamma$ is an upper bound of the optimal objective value $R^\ast$. In Step 2, we show that there exists $\{r_y\}_y$ that attain the objective value $1-\gamma$.

    \paragraph{``Otherwise'' case when $\xvec \not\prefix \z$}
    First note that by definition of $\M_\z$ and autoregressive composition, $\G_\z(\{s_1\},\xvec) = \G_\z(\emptyset,\xvec)$, i.e., $P(y)=Q(y)$ for all $y \in \tokenspace^\ast$ and hence $p_y=q_y$. Taking $r^\ast_y=1$ for all $y \in \tokenspace^\ast$ is the optimal solution to the linear program (\ref{prog:cca-lb-opt-one-t-lp}) that yields objective value $\sum_{y \in \tokenspace^\ast}1\cdot p_y = \sum_{y \in \tokenspace^\ast}p_y = 1$. 

    \paragraph{Step 1 for $\xvec \prefix \z$}
    Rearranging the constraints we know that
    \[
    \forall y \in \tokenspace^\ast, R \leq r_y e^\eps\cdot \frac{p_y}{q_y}.
    \]
    Let $y^\ast = \arg\min_{y}{p_y/q_y}$, we have an upper bound of the optimal objective value
    \[
    R \leq r_{y^\ast} e^\eps\cdot \frac{p_{y^\ast}}{q_{y^\ast}} \leq e^\eps\cdot \frac{p_{y^\ast}}{q_{y^\ast}}  \defeq \overline{R^\ast}.
    \]

    By definition of $\M_\z$ and its rollout $\G_\z$ , we have $y^\ast = \z\concat 0$,  $p_{y^\ast}/q_{y^\ast} = e^{-\eps}(1-\gamma)$ and $\overline{R^\ast} = 1-\gamma$. 

    \paragraph{Step 2 for $\xvec \prefix \z$} We now claim that there exists $\{r_y\}_y$ feasible for the constraints of (\ref{prog:cca-lb-opt-one-t-lp}) such that $R=\sum_{y \in \tokenspace^\ast}r_y p_y = \overline{R^\ast} = 1-\gamma$, and the optimal has objective value $1-\gamma$.

    It suffices to show that there exists $\{r_y\}_y$ such that $R=\sum_{y \in \tokenspace^\ast}r_y p_y = \overline{R^\ast}$, and for all $y\in\tokenspace^\ast$,
    \[
    e^{-\eps}q_y \leq \frac{r_y p_y}{\overline{R^\ast}} \leq e^{\eps}q_y,\,\, 0 \leq r_y \leq 1,
    \]
    i.e.,
    \begin{equation}\label{eq:cca-lp-r-ranges}
    e^{-\eps}\cdot \overline{R^\ast}\cdot \frac{q_y}{p_y} \leq r_y \leq \min\left\{e^{\eps}\cdot \overline{R^\ast}\cdot \frac{q_y}{p_y},1\right\},
    \end{equation}
    since $q_y,p_y,\overline{R^\ast} > 0$.
    For $y \in \tokenspace^\ast$, let 
    \[
    \underline{r}_y \defeq e^{-\eps} \overline{R^\ast}\cdot \frac{q_y}{p_y},\, 
    \overline{r}_y \defeq \min\left\{e^{\eps} \overline{R^\ast}\cdot \frac{q_y}{p_y},1\right\}
    \]
    be the lower and upper bounds of $r_y$ given by (\ref{eq:cca-lp-r-ranges}) respectively.

    To show the existence of such $\{r_y\}_y$, it suffices to show that 
    \begin{equation}\label{eqn:cca-lp-sandwich-cond}
    \sum_{y \in \tokenspace^\ast} \underline{r}_y p_y \leq \overline{R^\ast} \leq \sum_{y \in \tokenspace^\ast} \overline{r}_y p_y \defeq \overline{R}.
    \end{equation}
    This is because if the above inequalities holds, then one can obtain $\{r_y\}_y$ starting from 
    initializing $r_y = \underline{r}_y$ for all $y \in \tokenspace^\ast$ and then saturate $r_y$ for $y \in \tokenspace^\ast$ until $R=\sum_{y}r_yp_y$ reaches $\overline{R^\ast}$. To see this: The left hand side inequality guarantees that by initializing $r_y = \underline{r}_y$ for all $y$, $\sum_{y}r_yp_y \leq \overline{R^\ast}$. The right hand side inequality guarantees that when all $r_y$ are increased to its upper limit $\overline{r}_y$, $\overline{R^\ast} \leq \sum_{y}r_yp_y$. Therefore, there must exists some intermediate $\{r_y\}_y$ with $\underline{r}_y \leq y \leq \overline{r}_y$ for each $y$ such that $\sum_{y}r_yp_y = \overline{R^\ast}$.

    Now we show (\ref{eqn:cca-lp-sandwich-cond}) holds.
    First note that the left hand side inequality of (\ref{eqn:cca-lp-sandwich-cond}) always holds,
    \[
     \sum_{y \in \tokenspace^\ast} \underline{r}_y p_y = \sum_{y\in \tokenspace^\ast}e^{-\eps} \overline{R^\ast}\cdot \frac{q_y}{p_y}\cdot p_y \leq \sum_{y \in \tokenspace^\ast} e^{-\eps} \overline{R^\ast} \cdot q_y = e^{-\eps}\overline{R^\ast}\sum_{y \in \tokenspace^\ast}q_y = e^{-\eps}\overline{R^\ast} \leq \overline{R^\ast}.
    \]

    Next we show that the right hand side inequality of (\ref{eqn:cca-lp-sandwich-cond}) holds for the instance. Let $y' = \z \concat 1$. By definition of $\M_\z$ and its rollout $\G_\z$, we have
    \begin{itemize}
        \item $q_{y'} = (1/2)^{\ell+1}$, $p_{y'} = (1/2)^{\ell+1} (2-e^{-\eps}(1-\gamma))$ and $\overline{r}_{y'} = \min\left\{e^{\eps} (1-\gamma)/ (2-e^{-\eps}(1-\gamma)), 1\right\}$,
        \item $q_y=p_y = (1/2)^{\ell+1}$ for all $y \in \tokenspace^\ast\setminus\{y',y^\ast\}$. 
    \end{itemize}
    Therefore,
    \begin{align*}
        \overline{R} - \overline{r}_{y'}p_{y'}-\overline{r}_{y^\ast}p_{y^\ast} = &\sum_{y \in \tokenspace^\ast\setminus\{y',y^\ast\}}\overline{r}_yp_y\\ = &\sum_{y \in \tokenspace^\ast\setminus\{y',y^\ast\}} \min\left\{e^{\eps} \overline{R^\ast}\cdot \frac{q_y}{p_y},1\right\}\cdot p_y\\ = &\sum_{y \in \tokenspace^\ast\setminus\{y',y^\ast\}} \min\left\{e^{\eps} \overline{R^\ast},1\right\} p_y.
    \end{align*}
    Recall that $\overline{R^\ast} = 1-\gamma$ for the instance, therefore we have
    \begin{equation}\label{eqn:opt-cca-unrelated-r-ub}
    \overline{R} - \overline{r}_{y'}p_{y'}-\overline{r}_{y^\ast}p_{y^\ast} = \min\{e^{\eps}(1-\gamma),1\}\cdot \sum_{y \in \tokenspace^\ast\setminus\{y',y^\ast\}} p_y = \min\{e^{\eps}(1-\gamma),1\}\cdot \left(1-\frac{1}{2^\ell}\right).
    \end{equation}
    Note that $r_{y^\ast} = \min\left\{e^{\eps}(1-\gamma)/\left(e^{-\eps}\cdot(1-\gamma)\right),1\right\} = \min\{e^{2\eps},1\} = 1$ and $p_{y^\ast} = (1/2)^{\ell+1}(e^{-\eps}(1-\gamma))$. Therefore we have
    \[
    \overline{R} - \overline{r}_{y'}p_{y'} = \min\{e^{\eps}(1-\gamma),1\}\cdot \left(1-\frac{1}{2^\ell}\right) + \frac{1}{2^{\ell+1}}(e^{-\eps}(1-\gamma)).
    \]
    Recall that $\overline{r}_{y'} = \min\left\{e^{\eps} (1-\gamma)/ (2-e^{-\eps}(1-\gamma)), 1\right\}$, and we have $2-e^{-\eps}(1-\gamma) > 1$ as $e^{-\eps}\leq1$ and $1-\gamma < 1$.
    \begin{itemize}
        \item Case 1: $e^{\eps}(1-\gamma) \leq 1$. We have $\overline{r}_{y'} = e^{\eps}(1-\gamma)/ (2-e^{-\eps}(1-\gamma))$,
        \begin{align*}
            \overline{R}
            &= e^{\eps}(1-\gamma)\cdot \left(1-\frac{1}{2^\ell}\right) + \frac{1}{2^{\ell+1}}(e^{-\eps}(1-\gamma)) + \frac{e^{\eps}(1-\gamma)}{2-e^{-\eps}(1-\gamma)} \cdot \frac{1}{2^{\ell+1}} (2-e^{-\eps}(1-\gamma)) \\
            &= e^{\eps}(1-\gamma)\cdot \left(1-\frac{1}{2^\ell}\right) +  \frac{1}{2^{\ell+1}}(e^{-\eps}(1-\gamma)) + e^{\eps}(1-\gamma)\cdot \frac{1}{2^{\ell+1}} \\
            &= (1-\gamma)\cdot \left(e^\eps - e^\eps\cdot\frac{1}{2^\ell} + \frac{1}{2^{\ell+1}}\left(e^{-\eps}+e^\eps \right)\right).
        \end{align*}
        Note that $e^{-\eps} + e^{\eps} \geq 2$ for all $\eps > 0$, 
        \begin{align*}
            \overline{R} &\geq (1-\gamma)\cdot \left(e^\eps - e^\eps\cdot\frac{1}{2^\ell} + \frac{1}{2^{\ell+1}}\cdot 2\right)\\ 
            &= (1-\gamma)\cdot \left(e^\eps-(e^\eps-1)\cdot \frac{1}{2^\ell}.\right)\\
            &= (1-\gamma)\cdot \left(e^\eps-1+1-(e^\eps-1)\cdot \frac{1}{2^\ell}.\right)\\
            &= (1-\gamma)\cdot \left((e^\eps-1)\left(1-\frac{1}{2^\ell}\right)+1\right).
        \end{align*}
        Since $e^{\eps} - 1 \geq 0$ for all $\eps > 0$, we have
        \[
        \overline{R} \geq (1-\gamma)\cdot 1 = 1-\gamma = \overline{R^\ast}.
        \]
        \item Case 2: $1 < e^\eps(1-\gamma)\leq 2-e^{-\eps}(1-\gamma)$. We have $\overline{r}_{y'} = e^{\eps}(1-\gamma)/ (2-e^{-\eps}(1-\gamma))$, and 
         \begin{align*}
            \overline{R}
            &= 1\cdot \left(1-\frac{1}{2^\ell}\right) + \frac{1}{2^{\ell+1}}(e^{-\eps}(1-\gamma)) + \frac{e^{\eps}(1-\gamma)}{2-e^{-\eps}(1-\gamma)} \cdot \frac{1}{2^{\ell+1}} (2-e^{-\eps}(1-\gamma)) \\
            &= 1 - \frac{1}{2^\ell} + \frac{1}{2^{\ell+1}}(e^{-\eps}(1-\gamma)) + e^{\eps}(1-\gamma)\cdot \frac{1}{2^{\ell+1}} \\
            &= 1 - \frac{1}{2^\ell} + \frac{1}{2^{\ell+1}}(e^{-\eps} + e^{\eps})\cdot (1-\gamma).
        \end{align*}
        Since $e^{-\eps} + e^{\eps} \geq 2$ for all $\eps > 0$, therefore 
        \[
        \overline{R} \geq 1 - \frac{1}{2^\ell} + \frac{1}{2^{\ell+1}}\cdot 2\cdot(1-\gamma) = 1+\frac{1}{2^\ell}\cdot(-1+1-\gamma) = 1-\frac{\gamma}{2^\ell} \geq 1-\gamma = \overline{R^\ast}.
        \]
        \item Case 3: $2-e^{-\eps}(1-\gamma) < e^\eps(1-\gamma) $. We have $\overline{r}_{y'} = 1$, and 
         \begin{align*}
            \overline{R}
            &= 1\cdot \left(1-\frac{1}{2^\ell}\right) + \frac{1}{2^{\ell+1}}(e^{-\eps}(1-\gamma)) + \frac{1}{2^{\ell+1}} (2-e^{-\eps}(1-\gamma)) \\
            &= 1 - \frac{1}{2^\ell} + \frac{1}{2^{\ell+1}}\cdot 2
            \\
            &= 1 \geq 1-\gamma = \overline{R^\ast}. \\
        \end{align*}
    \end{itemize}
    Finally, we conclude that the right hand side inequality of (\ref{eqn:cca-lp-sandwich-cond}) holds and hence for $\xvec \prefix \z$, an optimal solution $\{r^\ast_y\}_y$ satisfies
    \[
      R^\ast = \sum_{y \in \tokenspace^\ast}r^\ast_y p_y = \overline{R^\ast} =  1-\gamma.
    \]
    \end{proof}
    Therefore, for $\xvec \prefix \z$, we have $\Pr\left[\cGopt_\z\bigl(S^\ast,\xvec\bigr) \text{ credits } s_1\right]= 1-R^\ast = \gamma$. This conclude the proof of Lemma \ref{lem:retrofit-lb-instance-opt}. 
\begin{flushright}
   $\qed$ 
\end{flushright}

\subsection{Proof of Lemma~\ref{lem:lb-opt-find-z-worst-case}}
\begin{proof}
    We show that the lowerbound holds even for algorithms that have access to the full distribution of $\M_\z(S,\xvec)$ when querying $\M_\z$ on $(S,\xvec)$. Such query is stronger than the queries an oracle algorithm can make as defined in Section \ref{sec:prelim}: One can simulate the two types of queries with the full distribution of $\M_\z(S,\xvec)$.
    
    Given (randomized) algorithm $A$ that always makes fewer than $N=2^\ell/3-1=\Omega(2^\ell)$ queries to $M_\z$. 
    Let $\mathcal{A}$ be the set of deterministic algorithms that always make fewer than $N$ queries to $M_\z$. By Yao's Minimax Lemma \citep{yao1977probabilistic}, the error probabilitity of $A$ on the worst $\z$
    \begin{align*}
    \Pr_A[\z \neq A(\M_\z)] 
        &= \max_{\M \in \mathcal{M}_{\ell,\gamma,\eps}}\EExp{A}{\1[A(\M_\z) \neq \z]} \\
        &\geq \min_{A'\in \mathcal{A}}\EExp{\M_\z \sim \mathcal{D}}{\1[A'(\M_\z) \neq \z]}
    \end{align*}
    for any distribution $\mathcal{D}$ over $\mathcal{M}_{\ell,\gamma,\eps}$.

    By definition the output distribution of $\M_{\z}(S, \xvec)$ is identical for all $S,\xvec\in \bits^\ell$ except $S=S^\ast=\{s_1\},\xvec=\z$. Therefore, any deterministic $A' \in \mathcal{A}$, even with access to the full distribution of $\M_\z(S,\xvec)$ after querying $\M_\z$ on $(S,\xvec)$ can not update its state before making the query $\M_\z(\{s_1\}, \z)$. Therefore, $A'$ must make queries to $\M_\z$ according to a fixed sequence $T$ of prompts with length at most $N$ until it makes the query $\M_\z(\{s_1\}, \z)$. 
    
    Let $\mathcal{D}$ be the uniform distribution over $\mathcal{M}_{\ell,\gamma,\eps}$, we have
    \begin{align*}
        \EExp{\M_\z \sim \mathcal{D}}{\1[A(\M_\z) \neq \z]} 
        = &\sum_{\z \in \bits^\ell}\frac{1}{2^\ell}\cdot \1[A'(M_\z)\neq \z]\\
         \geq& \sum_{\substack{\z \in \bits^\ell\setminus T}}\frac{1}{2^\ell}\cdot \1[A'(M_\z)\neq \z].
    \end{align*}
    Let $\z'$ be the output of $A'$ when it reaches the end of the sequence $T$ and has not made the query $\M_\z(\{s_1\}, \z)$. For $\z \not \in T$, we know that $A'(\M_\z)$ reaches the end of $T$ and outputs $\z'$, therefore 
    \begin{align*}
        \EExp{\M_\z \sim \mathcal{D}}{\1[A(\M_\z) \neq \z]}  \geq& \sum_{\substack{\z \in \bits^\ell\setminus T}}\frac{1}{2^\ell}\cdot \1[A'(M_\z)\neq \z]\\ =& \frac{1}{2^\ell}\sum_{\substack{\z \in \bits^\ell\setminus T}}\1[\z'\neq \z].
    \end{align*}
    By assumption that $T$ has length at most $N=2^\ell/3-1$, we have 
    \begin{align*}
    \EExp{\M_\z \sim \mathcal{D}}{\1[A(\M_\z) \neq \z]} \geq& \frac{1}{2^\ell}\sum_{\substack{\z \in \bits^\ell\setminus T}}\1[\z'\neq \z]\\ \geq& \frac{1}{2^\ell}\sum_{\substack{\z \in \bits^\ell\setminus T\setminus\{\z'\}}}\1[\z'\neq \z]\\ \geq&  \frac{1}{2^\ell}\cdot \left(2^\ell - N-1\right) = \frac{1}{3}
    \end{align*}

    Hence $A$ must have error probability $\Pr_A[\z \neq A(\M_\z)] > \frac{1}{3}$. Therefore, any algorithm solving $\FindZ{\M_\z}$ requires $\Omega(2^{\ell})$ queries in the worst case.

\end{proof}

\subsection{Proof of Lemma~\ref{lemma:find-z-alpha-approx}}
\begin{proof}
    Let $c(\xvec) \defeq \Pr\left[\cGopt_\z\left(S,\xvec\right)  = \{s_1\}\right]$.
    By Lemma~\ref{lem:retrofit-lb-instance-opt}, we know that $c(\xvec) = \gamma$ if $\xvec$ is a prefix of $\z$, and $c(\xvec) = 0$ otherwise. Consider the following algorithm that starts from the empty string $\xvec=\emptystr$ and iteratively recovers $\z$ by appending one bit at a time. To decide which bit to append, the algorithm utilizes the seperation of $c(\xvec \concat x)$ between when $\xvec\concat x$ is a prefix of $\z$ and when it is not.

\begin{algorithm}[H] \caption{Algorithm solving $\FindZ{\aG^\alpha_\z}$}
    \label{algo:find-z-alpha-approx}
    \begin{algorithmic}[1]
        \INPUT oracle access to $\aG^\alpha_\z$
        \OUTPUT sequence $\z$
        
        \STATE $\xvec \gets \emptystr$\; 
        \STATE $k \gets 1$\;
        \WHILE{$k < \ell$}
        \STATE $c_0 \gets $ estimate of $c^\alpha(\xvec \concat 0)$ by querying $\aG^\alpha_\z$
        \STATE $c_1 \gets $ estimate of $c^\alpha(\xvec \concat 1)$ by querying $\aG^\alpha_\z$
        \IF{$c_0 > c_1$}
        \STATE $\xvec \gets \xvec \concat 0$
        \ELSE
        \STATE $\xvec \gets \xvec \concat 1$
        \ENDIF
        \STATE $k \gets k + 1$
        \ENDWHILE
        \STATE {\bfseries Output}($\xvec$)
    \end{algorithmic}
\end{algorithm}
    For the estimation of $c^\alpha(\xvec \concat 0)$ and $c^\alpha(\xvec \concat 1)$ in Line 4 and 5, choose the parameters of additive error $\tau < (\gamma - 2 \alpha)/2$ and success probabilitity $1-\beta = 1-1/(6\ell)$. Applying Lemma~\ref{lem:z-alpha-estimation}, the estimations with parameter $\tau$ and $\beta$ can be done with using 
    \[
    O(\tau^{-2}\log(2/\beta)) = O\left(\left(\frac{\gamma - 2 \alpha}{2}\right)^{-2}\cdot \log\left( \frac{2}{1/6\ell}\right)\right) = O(4/(\gamma-2\alpha)^2\cdot\log(12\ell))
    \]
    oracle queries to $\aG^\alpha_\z$. Then, the total number of queries the algorithms makes is
    \[
    O(2\ell\cdot 4/(\gamma-2\alpha)^2\cdot\log(12\ell))=O(\ell\log(12\ell)/(\gamma-2\alpha)^2).
    \]

    The loop from Line 3 to Line 12 iterates for $\ell$ times, and there are 2 etimations in each iteration. By union bound, the probability that all estimations succeed is at least $1-2\ell\beta = 2/3$, in which case the algorithm correctly outputs $\z$. We claim that when all estimations in Algorithm \ref{algo:find-z-alpha-approx} succeed, the algorithm correctly outputs $\z$. In fact, we show by induction that, when all estimations succeed, the invariant $\xvec \prefix \z$ always holds before the algorithm outputs $\xvec$. 
    \begin{itemize}
        \item Base case: $\xvec=\emptystr$. The invariant naturally holds since $\emptystr \prefix \z$.
        \item Inductive step: Assume the invariant holds for $\xvec$. 
        
        Let $x\in \bits$ such that $\xvec \concat x \prefix z$ and $\bar{x}$ be the other bit such that $\xvec \concat \bar{x} \not\prefix z$. 
        
        By assumption that the estimations succeed, we have
        \[ 
        c_x > c^\alpha(\xvec\concat x) - (\gamma-2\alpha)/2,\,\,
        c_{\bar{x}} < c^\alpha(\xvec\concat \bar{x}) + (\gamma-2\alpha)/2.
        \]
        Further, by assumptions that $\aG^\alpha_\z$ is an $\alpha$-approximation of $\cGopt_\z$, we have
        \[
        c^\alpha(\xvec\concat x) > c(\xvec\concat x) - \alpha,\,\,
        c^\alpha(\xvec\concat \bar{x}) < c(\xvec\concat \bar{x}) + \alpha.
        \]
        Combining the above inequalities, we have
        \[
        c_x > c(\xvec\concat x) - \alpha - (\gamma-2\alpha)/2,\,\,
        c_{\bar{x}} < c(\xvec\concat \bar{x}) + \alpha + (\gamma-2\alpha)/2.
        \]
        From Lemma \ref{lem:retrofit-lb-instance-opt}, we have
        \[
        c_x > \gamma - \alpha - (\gamma-2\alpha)/2 = \gamma/2,\,\,
        c_{\bar{x}} < 0 + \alpha + (\gamma-2\alpha)/2 = \gamma/2.
        \]
        and hence $c_x > c_{\bar{x}}$. This implies that the lines 6-9 of Algorithm \ref{algo:find-z-alpha-approx} will set $\xvec \gets \xvec \concat x$. By definition of $x$, the invariant that $\xvec \prefix \z$ is preserved.
    \end{itemize}

    Therefore, the algorithm solves $\FindZ{\aG^\alpha_\z}$. 

\end{proof}

\begin{lemma}\label{lem:z-alpha-estimation}
For any $\tau >0$, $\beta>0$, $c^\alpha(\xvec) = \Pr[\cG^\alpha_\z(S,\xvec) \credits s_1]$ can be estimated within additive error $\tau$, with probability $1-\beta$, using $O(\tau^{-2}\log(2/\beta))$ oracle queries from $\cG^\alpha_\z(S,\xvec)$. 
\end{lemma}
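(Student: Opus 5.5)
The estimator is the empirical frequency of crediting, and the bound follows from a Hoeffding bound on i.i.d.\ Bernoulli indicators. Fix $S$ and $\xvec$, and set $n \defeq \lceil \ln(2/\beta)/(2\tau^2)\rceil$. The procedure makes $n$ independent sampling queries to the oracle $\cG^\alpha_\z(S,\xvec)$, obtaining pairs $(y^{(1)},C^{(1)}),\dots,(y^{(n)},C^{(n)})$. For each $k$, define the indicator $Z_k \defeq \1[s_1 \in C^{(k)}]$. The estimate is $\hat c \defeq \frac1n\sum_{k=1}^n Z_k$.

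The key observation is that implementing an oracle to $\cG^\alpha_\z$ means exactly that each query returns a fresh, independent sample from $\cG^\alpha_\z(S,\xvec)$ (Definition~\ref{def:retrofitting-problem}). Hence the $Z_k$ are i.i.d.\ Bernoulli random variables with mean $\Pr[\cG^\alpha_\z(S,\xvec)\credits s_1] = c^\alpha(\xvec)$. We do not need any structural property of $\cG^\alpha_\z$: it need not be CCA, an augmentation, or optimal. The only property used is that its credit indicator lies in $[0,1]$.

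Applying the two-sided Hoeffding inequality gives
\[
\Pr\bigl[|\hat c - c^\alpha(\xvec)| > \tau\bigr] \le 2\exp(-2n\tau^2) \le \beta
\]
by the choice of $n$. The procedure therefore succeeds with probability at least $1-\beta$ using $n = O(\tau^{-2}\log(2/\beta))$ queries, which is the stated bound.

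There is no real obstacle. The only points needing care are that queries are answered with independent samples, which is what legitimizes Hoeffding, and that the same fixed $(S,\xvec)$ is used for every query. In the application in Lemma~\ref{lemma:find-z-alpha-approx}, $S=\{s_1\}$ and the prompt is $\xvec\concat x$.
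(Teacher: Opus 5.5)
Your proposal is correct and takes the same approach as the paper's proof: estimate $c^\alpha(\xvec)$ by the empirical frequency of $s_1 \in C$ over independent samples from $\cG^\alpha_\z(S,\xvec)$, then apply the Chernoff--Hoeffding bound. Your choice $n=\lceil \ln(2/\beta)/(2\tau^2)\rceil$ gives exactly $2\exp(-2n\tau^2)\le\beta$. The paper's $N=\tau^{-2}\log(2/\beta)$ instead yields $\beta^2/2\le\beta$, where its displayed ``$=\beta$'' should read ``$\le\beta$''; this is the same argument.
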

\begin{proof}[Proof of Lemma~\ref{lem:z-alpha-estimation}]
    Independently sample $N=\tau^{-2}\log(2/\beta)$ outcomes $(y_1,C_2),\dots,(y_N,C_N)$ on input $(S,\xvec)$ from the output distribution of $\cG^\alpha_\z(S,\xvec)$ by making queries to $\cG^\alpha_\z$. Let $X_i = \1[s_1 \in C_i]$ and we have $\EExp{}{X_i} = \EExp{}{\1[s_1 \in C_i]} = \Pr_{(x,C_i)\sim\cG^\alpha_\z(S,\xvec) }[s_1\in C_i] = c^\alpha(\xvec)$. Since $X_i$s are independent, by Chernoff-Hoeffding inequality, we have
    \[
    \Pr\left[\left|\frac{1}{N}\sum_{i=1}^N X_i - c^\alpha(\xvec)\right| \geq \tau\right] \leq 2\exp\left(\frac{-2\tau^2}{N\cdot (\frac{1}{N}\cdot 1)^2}\right) = \beta.
    \]
\end{proof}

%% file: reference.bib
@article{livni2024credit,
  title={Credit Attribution and Stable Compression},
  author={Livni, Roi and Moran, Shay and Nissim, Kobbi and Pabbaraju, Chirag},
  journal={Advances in Neural Information Processing Systems},
  volume={37},
  pages={2663--2685},
  year={2024}
}

@inproceedings{VKB23,
  title={On provable copyright protection for generative models},
  author={Vyas, Nikhil and Kakade, Sham M and Barak, Boaz},
  booktitle={International conference on machine learning},
  pages={35277--35299},
  year={2023},
  organization={PMLR}
}

@inproceedings{dwork2018privacy,
  title={Privacy-preserving prediction},
  author={Dwork, Cynthia and Feldman, Vitaly},
  booktitle={Conference On Learning Theory},
  pages={1693--1702},
  year={2018},
  organization={PMLR}
}

@article{majmudar2022differentially,
  title={Differentially private decoding in large language models},
  author={Majmudar, Jimit and Dupuy, Christophe and Peris, Charith and Smaili, Sami and Gupta, Rahul and Zemel, Richard},
  journal={arXiv preprint arXiv:2205.13621},
  year={2022}
}

@article{amin2024private,
  title={Private prediction for large-scale synthetic text generation},
  author={Amin, Kareem and Bie, Alex and Kong, Weiwei and Kurakin, Alexey and Ponomareva, Natalia and Syed, Umar and Terzis, Andreas and Vassilvitskii, Sergei},
  journal={arXiv preprint arXiv:2407.12108},
  year={2024}
}

@article{nematov2025source,
  title={Source attribution in retrieval-augmented generation},
  author={Nematov, Ikhtiyor and Kalai, Tarik and Kuzmenko, Elizaveta and Fugagnoli, Gabriele and Sacharidis, Dimitris and Hose, Katja and Sagi, Tomer},
  journal={arXiv preprint arXiv:2507.04480},
  year={2025}
}

@inproceedings{dpSGD,
  title={Deep learning with differential privacy},
  author={Abadi, Martin and Chu, Andy and Goodfellow, Ian and McMahan, H Brendan and Mironov, Ilya and Talwar, Kunal and Zhang, Li},
  booktitle={Proceedings of the 2016 ACM SIGSAC conference on computer and communications security},
  pages={308--318},
  year={2016}
}

@inproceedings{yao1977probabilistic,
  title={Probabilistic computations: Toward a unified measure of complexity},
  author={Yao, Andrew Chi-Chin},
  booktitle={18th Annual Symposium on Foundations of Computer Science (sfcs 1977)},
  pages={222--227},
  year={1977},
  organization={IEEE Computer Society}
}

@article{dwork2016concentrated,
  title={Concentrated differential privacy},
  author={Dwork, Cynthia and Rothblum, Guy N},
  journal={arXiv preprint arXiv:1603.01887},
  year={2016}
}

@inproceedings{bun2016concentrated,
  title={Concentrated differential privacy: Simplifications, extensions, and lower bounds},
  author={Bun, Mark and Steinke, Thomas},
  booktitle={Theory of cryptography conference},
  pages={635--658},
  year={2016},
  organization={Springer}
}

@inproceedings{mironov2017renyi,
  title={R{\'e}nyi differential privacy},
  author={Mironov, Ilya},
  booktitle={2017 IEEE 30th computer security foundations symposium (CSF)},
  pages={263--275},
  year={2017},
  organization={IEEE}
}

@article{dong2022gaussian,
  title={Gaussian differential privacy},
  author={Dong, Jinshuo and Roth, Aaron and Su, Weijie J},
  journal={Journal of the Royal Statistical Society Series B: Statistical Methodology},
  volume={84},
  number={1},
  pages={3--37},
  year={2022},
  publisher={Oxford University Press}
}

@article{shariff2018differentially,
  title={Differentially private contextual linear bandits},
  author={Shariff, Roshan and Sheffet, Or},
  journal={Advances in Neural Information Processing Systems},
  volume={31},
  year={2018}
}

@article{bassily2018model,
  title={Model-agnostic private learning},
  author={Bassily, Raef and Thakkar, Om and Guha Thakurta, Abhradeep},
  journal={Advances in neural information processing systems},
  volume={31},
  year={2018}
}

@inproceedings{vietri2020private,
  title={Private reinforcement learning with pac and regret guarantees},
  author={Vietri, Giuseppe and Balle, Borja and Krishnamurthy, Akshay and Wu, Steven},
  booktitle={International Conference on Machine Learning},
  pages={9754--9764},
  year={2020},
  organization={PMLR}
}

@article{naor2023private,
  title={Private everlasting prediction},
  author={Naor, Moni and Nissim, Kobbi and Stemmer, Uri and Yan, Chao},
  journal={Advances in Neural Information Processing Systems},
  volume={36},
  pages={54785--54804},
  year={2023}
}

@article{kaplan2023black,
  title={Black-box differential privacy for interactive ml},
  author={Kaplan, Haim and Mansour, Yishay and Moran, Shay and Nissim, Kobbi and Stemmer, Uri},
  journal={Advances in Neural Information Processing Systems},
  volume={36},
  pages={77313--77330},
  year={2023}
}

@inproceedings{tang2024privacypreserving,
title={Privacy-Preserving In-Context Learning with Differentially Private Few-Shot Generation},
author={Xinyu Tang and Richard Shin and Huseyin A Inan and Andre Manoel and Fatemehsadat Mireshghallah and Zinan Lin and Sivakanth Gopi and Janardhan Kulkarni and Robert Sim},
booktitle={The Twelfth International Conference on Learning Representations},
year={2024},
url={https://openreview.net/forum?id=oZtt0pRnOl}
}

@inproceedings{yao2025private,
  title={Private Retrieval Augmented Generation with Random Projection},
  author={Yao, Dixi and Li, Tian},
  booktitle={ICLR 2025 Workshop on Building Trust in Language Models and Applications},
  year={2025}
}

@inproceedings{grislain2025rag,
  title={Rag with differential privacy},
  author={Grislain, Nicolas},
  booktitle={2025 IEEE Conference on Artificial Intelligence (CAI)},
  pages={847--852},
  year={2025},
  organization={IEEE}
}

@inproceedings{ghorbani2019data,
  title={Data shapley: Equitable valuation of data for machine learning},
  author={Ghorbani, Amirata and Zou, James},
  booktitle={International conference on machine learning},
  pages={2242--2251},
  year={2019},
  organization={PMLR}
}

@article{deng2025survey,
  title={A Survey of Data Attribution: Methods, Applications, and Evaluation in the Era of Generative AI},
  author={Deng, Junwei and Hu, Yuzheng and Hu, Pingbang and Li, Ting-Wei and Liu, Shixuan and Wang, Jiachen T and Ley, Dan and Dai, Qirun and Huang, Benhao and Huang, Jin and others},
  year={2025}
}
